\documentclass[11pt]{amsart}

\usepackage{microtype}
\usepackage{graphicx}
\usepackage{subfigure}
\usepackage{booktabs} 
\usepackage{bm}
\usepackage{amsthm}
\usepackage{amsmath}
\usepackage{amssymb}
\usepackage{cite}
\usepackage{color}
\usepackage{multirow}
\usepackage{scalefnt}
\usepackage{lscape}
\usepackage{url}
\usepackage{bm}
\usepackage{algorithmic}
\usepackage{algorithm}

\textwidth=33truepc
\textheight=50truepc



\newtheorem{thm}{Theorem}[section]

\newtheorem{lem}{Lemma}[section]

\numberwithin{equation}{section}

\title[Minimization of SFO Complexity
 of Adaptive Methods for Nonconvex Optimization]
 {Minimization of Stochastic First-order Oracle Complexity
 of Adaptive Methods for Nonconvex Optimization}

\author[H. Iiduka]{Hideaki Iiduka}

\address[H. Iiduka]{Department of Computer Science
Meiji University
1-1-1 Higashimita, Tama-ku, Kawasaki-shi, Kanagawa 214-8571 Japan}
\email{{\tt iiduka@cs.meiji.ac.jp}}

\keywords{adaptive method, batch size, nonconvex optimization, stochastic first-order oracle complexity}



\begin{document}

\begin{abstract}
Numerical evaluations have definitively shown that, for deep learning optimizers such as stochastic gradient descent, momentum, and adaptive methods, the number of steps needed to train a deep neural network halves for each doubling of the batch size and that there is a region of diminishing returns beyond the critical batch size. 
In this paper, we determine the actual critical batch size by using the global minimizer of the stochastic first-order oracle (SFO) complexity of the optimizer. 
To prove the existence of the actual critical batch size, we set the lower and upper bounds of the SFO complexity and prove that there exist critical batch sizes in the sense of minimizing the lower and upper bounds. 
This proof implies that, if the SFO complexity fits the lower and upper bounds, then the existence of these critical batch sizes demonstrates the existence of the actual critical batch size. 
We also discuss the conditions needed for the SFO complexity to fit the lower and upper bounds and provide numerical results that support our theoretical results.
\end{abstract}

\maketitle

\section{Introduction}
\label{sec:1}
\subsection{Background}
\label{subsec:1.1}
Adaptive methods are powerful deep learning optimizers used to find the model parameters of deep neural networks that minimize the expected risk and empirical risk loss functions \cite[Section 4]{bottou}. Specific adaptive methods are, for example, adaptive gradient (AdaGrad) \cite{adagrad}, root mean square propagation (RMSProp) \cite{rmsprop}, adaptive moment estimation (Adam) \cite{adam}, and adaptive mean square gradient (AMSGrad) \cite{reddi2018} (Table 2 in \cite{Schmidt2021} lists the main deep learning optimizers).

Deep learning optimizers have been widely studied from the viewpoints of both theory and practice. While the convergence and convergence rate of deep learning optimizers have been theoretically studied for convex optimization \cite{zin2010,adam,reddi2018,luo2019,dun2020}, theoretical investigation of deep learning optimizers for nonconvex optimization is needed so that such optimizers can be put into practical use in deep learning \cite{kxu2015,ar2017,vas2017}.

The convergence of adaptive methods has been studied for nonconvex optimization \cite{spider,chen2019,ada,iiduka2021}, and the convergence of stochastic gradient descent (SGD) methods has been studied for nonconvex optimization \cite{feh2020,chen2020,sca2020,loizou2021}. Chen et al. showed that generalized Adam, which includes the heavy-ball method, AdaGrad, RMSProp, AMSGrad, and AdaGrad with first-order momentum (AdaFom), has an $\mathcal{O}(\log K/\sqrt{K})$ convergence rate when a decaying learning rate ($\alpha_k = 1/\sqrt{k}$) is used \cite{chen2019}. AdaBelief (which adapts the step size in accordance with the belief in the observed gradients) using $\alpha_k = 1/\sqrt{k}$ has an $\mathcal{O}(\log K/\sqrt{K})$ convergence rate \cite{ada}. A method that unifies adaptive methods such as AMSGrad and AdaBelief has been shown to have a convergence rate of $\mathcal{O}(1/\sqrt{K})$ when $\alpha_k = 1/\sqrt{k}$ \cite{iiduka2021}, which improves on the results of \cite{chen2019,ada}.

\begin{figure*}[htbp]
\begin{tabular}{cc}
\begin{minipage}[t]{0.45\hsize}
\centering
\includegraphics[width=1\textwidth]{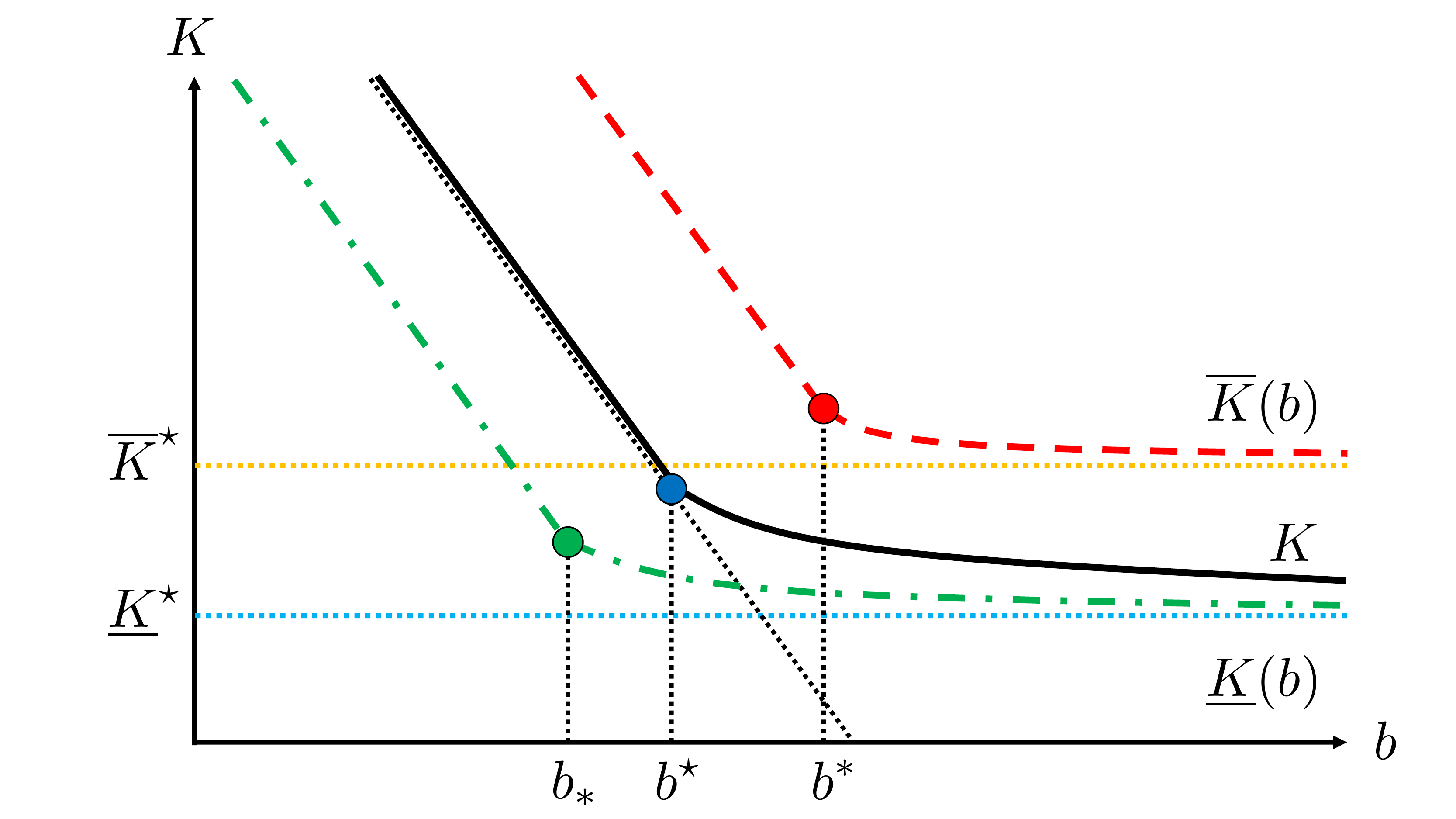}
\caption{Relationship between lower bound $\underline{K}(b)$ and upper bound $\overline{K}(b)$ ($b^\star$ denotes critical batch size).}
\label{fig:1}
\end{minipage} &
\begin{minipage}[t]{0.45\hsize}
\centering
\includegraphics[width=1\textwidth]{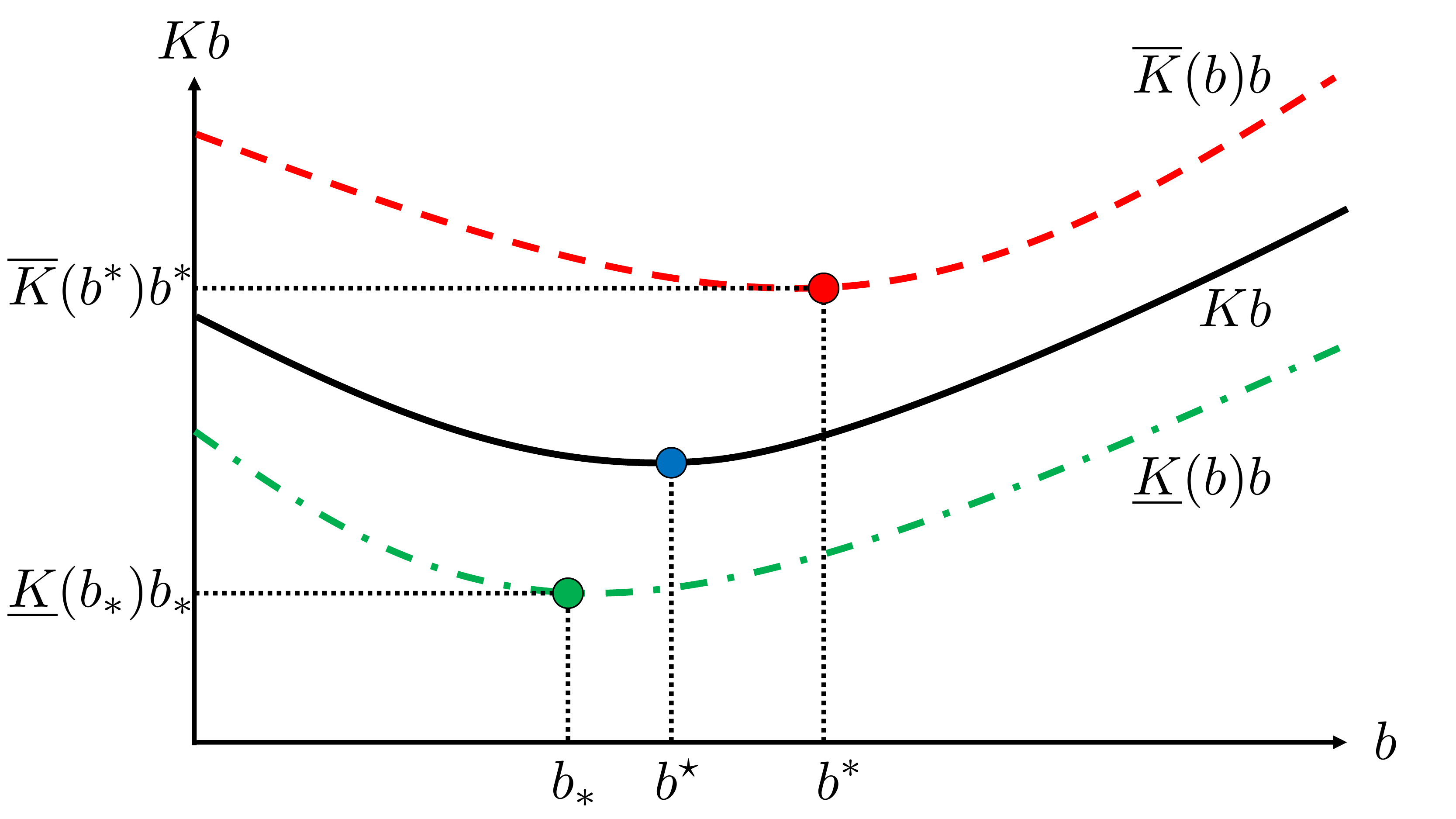}
\caption{Relationship between $\underline{K}(b) b$ and $\overline{K}(b) b$ ($b_*$ and $b^*$ are global minimizers of $\underline{K}(b) b$ and $\overline{K}(b) b$ that always exist).}
\label{fig:2}
\end{minipage}
\end{tabular}
\end{figure*}

There have been several practical studies of deep learning optimizers. Shallue et al. \cite{shallue2019} studied how increasing the batch size affects the performances of SGD, SGD with momentum \cite{polyak1964,momentum}, and Nesterov momentum \cite{nest1983,sut2013}. Zhang et al. \cite{zhang2019} studied the relationship between batch size and performance for Adam and K-FAC (Kronecker-factored approximate curvature \cite{kfac}). In both studies, it was numerically shown that increasing the batch size tends to decrease the number of steps needed for training deep neural networks, but with diminishing returns. Moreover, it was shown that SGD with momentum and Nesterov momentum can exploit larger batches than SGD \cite{shallue2019} and that K-FAC and Adam can exploit larger batches than SGD with momentum \cite{zhang2019}. Thus, it has been shown that momentum and adaptive methods can greatly reduce the number of steps needed for training deep neural networks \cite[Figure 4]{shallue2019}, \cite[Figure 5]{zhang2019}. Smith et al. \cite{l.2018dont} numerically showed that using enormous batches leads to reductions in the number of parameter updates and model training time.

\subsection{Motivation}
As described in Section \ref{subsec:1.1}, there have been theoretical studies of the relationship between the learning rate and convergence of deep learning optimizers. Furthermore, as also described in Section \ref{subsec:1.1}, the practical performance of a deep learning optimizer strongly depends on the batch size. Our goal in this paper is to clarify theoretically the relationship between batch size and the performance of deep learning optimizers. We focus on the relationship between the batch size and the number of steps needed for nonconvex optimization of several adaptive methods.

To explicitly show our contribution, we characterize the diminishing returns reported in \cite{shallue2019,zhang2019} by using the stochastic first-order oracle (SFO) complexities of deep learning optimizers. We define the SFO complexity of a deep learning optimizer as $Kb$ on the basis of the number of steps $K$ needed for training the deep neural network and on the batch size $b$ used in the optimizer. Let $b^\star$ be a critical batch size such that there are diminishing returns for all batch sizes beyond $b^\star$, as asserted in \cite{shallue2019,zhang2019}. Thanks to the numerical evaluations in \cite{shallue2019,zhang2019}, we know that there is a positive number $C$ such that 
\begin{align}\label{fact}
Kb \approx 2^C \text{ for } b \leq b^\star \text{ and }
Kb \geq 2^C \text{ for } b \geq b^\star,
\end{align}
where $K$ and $b$ are defined for $i,j\in\mathbb{N}$ by $K = 2^i$ and $b = 2^j$. For example, Figure 5(a) in \cite{shallue2019} shows $C \approx 16$ and $b^\star \approx 2^{8}$ for the momentum method used to train a convolutional neural network on the MNIST dataset. This implies that, while SFO complexity $Kb$ initially is not almost changed (i.e., $K$ halves for each doubling of $b$), $Kb$ is minimized at critical batch size $b^\star$, and there are diminishing returns once the batch size exceeds $b^\star$. Therefore, we can see that there are diminishing returns beyond the critical batch size $b^\star$ that minimizes SFO complexity. 

\subsection{Goal}
The goal of this paper is to develop a theory demonstrating the existence of critical batch sizes, which was shown numerically by Shallue et al. and Zhang et al. \cite{shallue2019,zhang2019}.

To reach this goal, we first examine the relationship between the number of steps $K$ needed for nonconvex optimization (Section \ref{sec:2}) and the batch size $b$ used for each adaptive method. We show that, under certain assumptions, the lower bound $\underline{K}$ and the upper bound $\overline{K}$ of $K$ are rational functions of batch size $b$; i.e., $\underline{K}$ and $\overline{K}$ have the following forms (see Section \ref{sec:3} for details):
\begin{align}\label{rational}
\frac{A b}{\{ \epsilon^2 - (C+D)\}b - B}
&=:
\underline{K}(b) \leq K\\ 
&\leq
\overline{K}(b) 
:= \frac{E b}{(\delta \epsilon^2 + G) b - F},\nonumber 
\end{align}
where $A$, $B$, $C$, $D$, $E$, $F$, and $G$ are positive constants depending on the parameters (e.g., constant learning rate $\alpha$ and momentum coefficient $\beta$) used in each adaptive method, and $\epsilon, \delta > 0$ are the precisions for nonconvex optimization. $\underline{K}$ and $\overline{K}$ defined by \eqref{rational} are monotone decreasing and convex for batch size $b$. Accordingly, $\underline{K}$ and $\overline{K}$ decrease with an increase in batch size $b$. For minimizing the empirical average loss function for the given number of samples $n$ (see \eqref{empirical}), $\underline{K}$ and $\overline{K}$ are minimized at $b = n$. We can see that there are asymptotic values $\underline{K}^\star = A/\{ \epsilon^2 - (C+D)\}$ and $\overline{K}^\star = E/(\delta \epsilon^2 + G)$ of $\underline{K}$ and $\overline{K}$. 

Figure \ref{fig:1} visualizes the relationship between $\underline{K}(b)$ and $\overline{K}(b)$. The number of steps $K$ needed for nonconvex optimization is inversely proportional to a batch size smaller than the critical batch size $b^\star$, and there are diminishing returns beyond the critical batch $b^\star$, as shown by the numerical results of Shallue et al. and Zhang et al.  \cite{shallue2019,zhang2019}. Moreover, the critical batch size $b^\star$ minimizes SFO complexity $Kb$, as seen in \eqref{fact}. We thus define the critical batch size by using the global minimizer of SFO complexity $K(b) b$.

From the forms of $\underline{K}(b)$ and $\overline{K}(b)$ in \eqref{rational}, we have the forms of $\underline{K}(b) b$ and $\overline{K}(b)b$:
\begin{align}\label{rational2}
\frac{A b^2}{\{ \epsilon^2 - (C+D)\}b - B}
&=
\underline{K}(b) b
\leq K b\\ 
&\leq
\overline{K}(b) b = \frac{E b^2}{(\delta \epsilon^2 + G) b - F}.\nonumber
\end{align}
We can show that $\underline{K}(b) b$ and $\overline{K}(b) b$ are convex; i.e., there are global minimizers, denoted by $b_*$ and $b^*$, of $\underline{K}(b) b$ and $\overline{K}(b) b$, as seen in Figure \ref{fig:2} (see Section \ref{sec:3} for details). Therefore, we can regard batch sizes $b_*$ and $b^*$ as the critical batch sizes in the sense of minimizing $\underline{K}(b) b$ and $\overline{K}(b) b$ defined by \eqref{rational2}.
Suppose that $\underline{K}(b) b$ and $\overline{K}(b) b$ defined by \eqref{rational2} approximate the actual SFO complexity $Kb$; i.e., 
\begin{align*}
&\frac{2B}{\epsilon^2 - (C+D)} = b_* \approx b^* 
= \frac{2F}{\delta \epsilon^2 + G}
\text{ and}\\ 
&\frac{4AB}{\{ \epsilon^2 - (C+D) \}^2}
=
\underline{K}(b_*) b_* \approx 
\overline{K}(b^*) b^*
= 
\frac{4EF}{(\delta \epsilon^2 + G)^2},
\end{align*}
which implies that 
\begin{align}\label{condition}
\begin{split}
&\underbrace{(C+D) F}_{\approx (M_1 \alpha + M_2)(M_3 \alpha - M_4)} + 
\underbrace{BG}_{\approx M_5 \alpha} \approx (F - \delta B) \epsilon^2 
\text{ and}\\
&\underbrace{(C+D)}_{\approx M_1 \alpha + M_2} \sqrt{EF} 
+ G \sqrt{AB} \approx (\sqrt{EF}- \delta \sqrt{AB})\epsilon^2,
\end{split}
\end{align}
where $M_i$ ($i=1,2,3,4,5$) are positive constants that do not depend on a constant learning rate $\alpha$ (see Section \ref{sec:3} for details). Hence, if precisions $\epsilon$ and $\delta$ are small, using a sufficiently small learning rate $\alpha$ could be used to approximate $K(b)b$ (Indeed, small learning rates were practically used by Shallue et al. \cite{shallue2019}). Then, the demonstration of the existence of critical batch sizes $b_*$ and $b^*$ leads to the existence of the actual critical batch $b^\star \approx b_* \approx b^*$.
 
\section{Nonconvex Optimization and Deep Learning Optimizers}
\label{sec:2}
\subsection{Nonconvex optimization}
Let $\mathbb{R}^d$ be a $d$-dimensional Euclidean space with inner product $\langle \bm{x},\bm{y} \rangle := \bm{x}^\top \bm{y}$ inducing norm $\| \bm{x}\|$, and let $\mathbb{N}$ be the set of nonnegative integers. Let $\mathbb{S}_{++}^d$ be the set of $d \times d$ symmetric positive-definite matrices, and let $\mathbb{D}^d$ be the set of $d \times d$ diagonal matrices: $\mathbb{D}^d = \{ M \in \mathbb{R}^{d \times d} \colon M = \mathsf{diag}(x_i), \text{ } x_i \in \mathbb{R} \text{ } (i\in [d] := \{1,2,\ldots,d\}) \}$.

The mathematical model used here is based on that of Shallue et al. \cite{shallue2019}. Given a parameter $\bm{\theta}$ in a set $\mathcal{S} \subset \mathbb{R}^d$ and given a data point $z$ in a data domain $Z$, a machine learning model provides a prediction for which the quality is estimated using a differentiable nonconvex loss function $\ell(\bm{\theta};z)$. We minimize the expected loss defined for all $\bm{\theta} \in \mathcal{S}$ by using
\begin{align}\label{expected}
L(\bm{\theta}) = \mathbb{E}_{z \sim \mathcal{D}} 
[\ell(\bm{\theta};z) ]
= \mathbb{E}[ \ell_{\xi} (\bm{\theta}) ],
\end{align}
where $\mathcal{D}$ is the probability distribution over $Z$, $\xi$ denotes a random variable with distribution function $\mathrm{P}$, and $\mathbb{E}[\cdot]$ denotes the expectation taken with respect to $\xi$. A particularly interesting example of \eqref{expected} is the empirical average loss defined for all $\bm{\theta} \in \mathcal{S}$:
\begin{align}\label{empirical}
L(\bm{\theta}; S) = \frac{1}{n} \sum_{i\in [n]} \ell(\bm{\theta};z_i)
= \frac{1}{n} \sum_{i\in [n]} \ell_i(\bm{\theta}),
\end{align}
where $S = (z_1, z_2, \ldots, z_n)$ denotes the training set, $\ell_i (\cdot) := \ell(\cdot;z_i)$ denotes the loss function corresponding to the $i$-th training data instance $z_i$, and $[n] := \{1,2,\ldots,n\}$. Our main objective is to find a {\em local minimizer} of $L$ over $\mathcal{S}$, i.e., a point $\bm{\theta}^\star \in \mathcal{S}$ that belongs to the set
\begin{align}\label{VI}
\begin{split}
&\mathrm{VI}\left(\mathcal{S},\nabla L \right)\\
&\quad := 
\left\{ 
\bm{\theta}^\star \in \mathcal{S} \colon
(\bm{\theta}^\star - \bm{\theta})^\top \nabla L(\bm{\theta}^\star) \leq 0
\text{ } \left(\bm{\theta} \in \mathcal{S} \right)
\right\},
\end{split}
\end{align}
where $\nabla L \colon \mathbb{R}^d \to \mathbb{R}^d$ denotes the gradient of $L$. The inequality $(\bm{\theta}^\star - \bm{\theta})^\top \nabla L(\bm{\theta}^\star) \leq 0$ is called the {\em variational inequality} of $\nabla L$ over $\mathcal{S}$ \cite{facc1}.

\subsection{Deep learning optimizers}
We assume that there exists SFO such that, for a given $\bm{\theta} \in \mathcal{S}$, it returns a stochastic gradient $\mathsf{G}_{\xi}(\bm{\theta})$ of the function $L$ defined by \eqref{expected}, where a random variable $\xi$ is supported on $\Xi$ and does not depend on $\bm{\theta}$. Throughout this paper, we assume three standard conditions:
\begin{enumerate}
\item[(S1)] $L \colon \mathbb{R}^d \to \mathbb{R}$ defined by \eqref{expected} is continuously differentiable.
\item[(S2)] Let $(\bm{\theta}_k)_{k\in \mathbb{N}} \subset \mathcal{S}$ be the sequence generated by a deep learning optimizer. For each iteration $k$, 
\begin{align}\label{gradient}
\mathbb{E}_{\xi_k}[ \mathsf{G}_{\xi_k}(\bm{\theta}_k)] = \nabla L(\bm{\theta}_k),
\end{align}
where $\xi_0, \xi_1, \ldots$ are independent samples, and the random variable $\xi_k$ is independent of $(\bm{\theta}_l)_{l=0}^k$. There exists a nonnegative constant $\sigma^2$ such that 
\begin{align}\label{sigma}
\mathbb{E}_{\xi_k}\left[ \left\|\mathsf{G}_{\xi_k}(\bm{\theta}_k) - 
\nabla L(\bm{\theta}_k) \right\|^2 \right] \leq \sigma^2.
\end{align}
\item[(S3)] For each iteration $k$, the deep learning optimizer samples a batch $B_{k}$ of size $b$ and estimates the full gradient $\nabla L$ as 
\begin{align*}
\nabla L_{B_k} (\bm{\theta}_k)
:= \frac{1}{b} \sum_{i\in [b]} \mathsf{G}_{\xi_{k,i}}(\bm{\theta}_k),
\end{align*}
where $\xi_{k,i}$ is the random variable generated by the $i$-th sampling in the $k$-th iteration. 
\end{enumerate}
In the case where $L$ is defined by \eqref{empirical}, we have that, for each $k$, $B_k \subset [n]$ and 
\begin{align*}
\nabla L_{B_k} (\bm{\theta}_k)
= \frac{1}{b} \sum_{i \in B_k} \nabla \ell_{i} (\bm{\theta}_k).
\end{align*}

We consider the following algorithm (Algorithm \ref{algo:1}), which is a unified algorithm for most deep learning optimizers,\footnote{We define $\bm{x} \odot \bm{x}$ for $\bm{x} := (x_i)_{i=1}^d \in \mathbb{R}^d$ by $\bm{x} \odot \bm{x} := (x_i^2)_{i=1}^d \in \mathbb{R}^d$. $\mathrm{C}(\cdot, l,u) \colon \mathbb{R} \to \mathbb{R}$ in AMSBound ($l,u \in \mathbb{R}$ with $l \leq u$ are given) is defined for all $x \in \mathbb{R}$ by
\begin{align*}
\mathrm{C}(x,l,u) := 
\begin{cases}
l &\text{ if } x < l,\\
x &\text{ if } l \leq x \leq u,\\
u &\text{ if } x > u.
\end{cases}
\end{align*}}
including Momentum \cite{polyak1964}, AMSGrad \cite{reddi2018}, AMSBound \cite{luo2019}, Adam \cite{adam}, and AdaBelief \cite{ada}, which are listed in Table \ref{table:ex}.

\begin{algorithm} 
\caption{Deep learning optimizer} 
\label{algo:1} 
\begin{algorithmic}[1] 
\REQUIRE
$\alpha \in (0,+\infty)$, 
$\beta \in [0,1)$, 
$\gamma \in [0,1)$
\STATE
$k \gets 0$, $\bm{\theta}_0 \in\mathcal{S}$, $\bm{m}_{-1} := \bm{0}$, 
$\mathsf{H}_0 \in \mathbb{S}_{++}^d \cap \mathbb{D}^d$
\LOOP 
\STATE 
$\bm{m}_k := \beta \bm{m}_{k-1} + (1-\beta) \nabla L_{B_k}(\bm{\theta}_k)$
\STATE
$\displaystyle{\hat{\bm{m}}_k := (1-{\gamma}^{k+1})^{-1}\bm{m}_k}$
\STATE
$\mathsf{H}_k \in \mathbb{S}_{++}^d \cap \mathbb{D}^d$ \text{ } (see Table \ref{table:ex} for examples of $\mathsf{H}_k$)
\STATE 
Find $\bm{\mathsf{d}}_k \in \mathbb{R}^d$ that solves $\mathsf{H}_k \bm{\mathsf{d}} = - \hat{\bm{m}}_k$
\STATE 
$\bm{\theta}_{k+1} := \bm{\theta}_k + \alpha \bm{\mathsf{d}}_k$
\STATE $k \gets k+1$
\ENDLOOP 
\end{algorithmic}
\end{algorithm}

Let $\xi_k = (\xi_{k,1}, \xi_{k,2},\ldots, \xi_{k,b})$ be the random samplings in the $k$-th iteration, and let $\xi_{[k]} = (\xi_0, \xi_1, \ldots, \xi_k)$ be the history of process $\xi_0, \xi_1, \ldots$ to time step $k$. The adaptive methods listed in Table \ref{table:ex} all satisfy the following conditions:
\begin{enumerate}
\item[(A1)] $\mathsf{H}_k := \mathsf{diag}(h_{k,i})$ depends on $\xi_{[k]}$, and $h_{k+1,i} \geq h_{k,i}$ for all $k\in\mathbb{N}$ and all $i\in [d]$;
\item[(A2)] For all $i\in [d]$, a positive number $H_i$ exists such that $\sup_{k \in \mathbb{N}} \mathbb{E}[h_{k,i}] \leq H_i$.
\end{enumerate}
We define $H := \max_{i\in [d]} H_i$. The optimizers in Table \ref{table:ex} obviously satisfy (A1). Previously reported results \cite[p.29]{chen2019}, \cite[p.18]{ada}, and \cite{iiduka2021} show that $(\mathsf{H}_k)_{k\in\mathbb{N}}$ in Table \ref{table:ex} satisfies (A2). For example, AMSGrad (resp. Adam) satisfies (A2) with $H = \sqrt{M}$ (resp. $H = \sqrt{M/(1-\zeta)}$), where $M := \sup_{k\in \mathbb{N}} \|\nabla L_{B_k}(\bm{\theta}_k) \odot \nabla L_{B_k}(\bm{\theta}_k)\| < + \infty$.

\begin{table}[htbp]
\caption{Examples of $\mathsf{H}_k \in \mathbb{S}_{++}^d \cap \mathbb{D}^d$ (step 5) in Algorithm \ref{algo:1} ($\eta,\zeta \in [0,1)$)}
\label{table:ex}
\begin{tabular}{l|l}
\hline
& $\mathsf{H}_k$ \\ \hline \hline
SGD 
& $\mathsf{H}_k$ is the identity matrix. \\
($\beta = \gamma = 0$)
&
\\ \hline
Momentum 
&
$\mathsf{H}_k$ is the identity matrix. \\
\cite{polyak1964}
& \\ 
($\gamma = 0$)
&
\\
\hline
AMSGrad 
&
$\bm{p}_k = \nabla L_{B_k}(\bm{\theta}_k) \odot \nabla L_{B_k}(\bm{\theta}_k)$ \\
\cite{reddi2018}
&
$\bm{v}_k = \eta \bm{v}_{k-1} + (1-\eta) \bm{p}_k$ \\ 
($\gamma = 0$)
&
$\hat{\bm{v}}_k = (\max \{ \hat{v}_{k-1,i}, v_{k,i} \})_{i=1}^d$ \\
&
$\mathsf{H}_k = \mathsf{diag} (\sqrt{\hat{v}_{k,i}})$ \\ \hline
AMSBound &
$\bm{p}_k = \nabla L_{B_k}(\bm{\theta}_k) \odot \nabla L_{B_k}(\bm{\theta}_k)$ \\
\cite{luo2019}
& 
$\bm{v}_k = \eta \bm{v}_{k-1} + (1- \eta) \bm{p}_k$ \\
($\gamma = 0$) &
$\hat{\bm{v}}_k = (\max \{ \hat{v}_{k-1,i}, v_{k,i} \})_{i=1}^d$ \\
&
$\tilde{{v}}_{k,i} 
= \mathrm{C}\left( \frac{1}{\sqrt{\hat{v}_{k,i}}}, l_k, u_k \right)^{-1}$ \\
&
$\mathsf{H}_k = \mathsf{diag} (\tilde{v}_{k,i})$ \\ \hline
Adam 
&
$\bm{p}_k = \nabla L_{B_k}(\bm{\theta}_k) \odot \nabla L_{B_k}(\bm{\theta}_k)$\\
\cite{adam}
&
$\bm{v}_k = \eta \bm{v}_{k-1} + (1- \eta) \bm{p}_k$\\
($v_{k,i} \leq v_{k+1,i}$)
&
$\bar{\bm{v}}_k = \frac{\bm{v}_k}{1- \zeta^k}$ \\
&
$\mathsf{H}_k = \mathsf{diag} (\sqrt{\bar{v}_{k,i}})$ \\ \hline
AdaBelief 
& 
$\tilde{\bm{p}}_k = \nabla L_{B_k}(\bm{\theta}_k) - \bm{m}_k$ \\
\cite{ada}
&
$\tilde{\bm{s}}_k = \tilde{\bm{p}}_k \odot \tilde{\bm{p}}_k$ \\
($s_{k,i} \leq s_{k+1,i}$) &
$\bm{s}_k = \eta \bm{v}_{k-1} + (1-\eta) \tilde{\bm{s}}_k$ \\
&
$\hat{\bm{s}}_k = \frac{\bm{s}_k}{1- \zeta^k}$ \\
&
$\mathsf{H}_k = \mathsf{diag} (\sqrt{\hat{s}_{k,i}})$ \\ \hline
\end{tabular}
\end{table}

The theoretical performance metric used to approximate a local minimizer in $\mathrm{VI}(\mathcal{S}, \nabla L)$ (see \eqref{VI}) is an $\epsilon$-approximation of the sequence $(\bm{\theta}_k)_{k\in\mathbb{N}}$ generated by Algorithm \ref{algo:1} in the sense of the mean value of $(\bm{\theta}_k - \bm{\theta})^\top \nabla L(\bm{\theta}_k)$ ($k\in [K]$); that is, 
\begin{align}\label{evaluation}
\begin{split}
\delta \epsilon^2
&\leq
\mathrm{V}(K, \bm{\theta}) := \frac{1}{K} \sum_{k\in [K]} \mathbb{E}\left[(\bm{\theta}_k - \bm{\theta})^\top \nabla L(\bm{\theta}_k) \right]\\ &\leq \epsilon^2,
\end{split}
\end{align}
where $\epsilon > 0$ and $\delta \in (0,1]$ are precisions and $\bm{\theta} \in \mathcal{S}$. It would be sufficient to consider that the right-hand side of \eqref{evaluation}, $\mathrm{V}(K, \bm{\theta}) \leq \epsilon^2$, approximates a local minimizer of $L$. Since consideration of $\mathrm{V}(K, \bm{\theta}) \leq \epsilon^2$ leads to the lower bound of $K$ satisfying $\mathrm{V}(K, \bm{\theta}) \leq \epsilon^2$, we also consider $\delta \epsilon^2 \leq \mathrm{V}(K, \bm{\theta})$, which leads to the upper bound of $K$.

\section{Main Results}
\label{sec:3}
Let us suppose that (S1), (S2)\eqref{gradient}, and (S3) hold. Then, Algorithm \ref{algo:1} satisfies the following equation (see Lemma \ref{lem:key} for details). For all $\bm{\theta}\in\mathcal{S}$ and all $K \geq 1$,
\begin{align}\label{key0}
\begin{split}
&\mathrm{V}(K,\bm{\theta})\\
&=
\frac{1}{2 \alpha \tilde{\beta} K} 
\underbrace{\sum_{k=1}^K \tilde{\gamma}_k
\left\{ 
\mathbb{E}\left[\left\| \bm{\theta}_{k} - \bm{\theta} \right\|_{\mathsf{H}_k}^2\right]
-
\mathbb{E}\left[\left\| \bm{\theta}_{k+1} - \bm{\theta} \right\|_{\mathsf{H}_k}^2\right]
\right\}}_{\Gamma_K}\\
& 
+ \frac{\alpha}{2 \tilde{\beta}K} \underbrace{\sum_{k=1}^K \tilde{\gamma}_k \mathbb{E} \left[ \left\|\bm{\mathsf{d}}_k \right\|_{\mathsf{H}_k}^2 \right]}_{A_K}
+ \frac{\beta}{\tilde{\beta}K}
\underbrace{
\sum_{k=1}^K \mathbb{E} \left[ (\bm{\theta} - \bm{\theta}_k)^\top \bm{m}_{k-1} \right]}_{B_K},
\end{split}
\end{align} 
where $\tilde{\beta} := 1 - \beta$, $\tilde{\gamma}_k := 1 - \gamma^{k+1}$, and $\|\bm{x}\|_S^2 := \bm{x}^\top S \bm{x}$ ($S \in \mathbb{S}_{++}^d$, $\bm{x} \in\mathbb{R}^d$).

\subsection{Lower bound of $K$}
We provide the lower bound of $K$ satisfying $\mathrm{V}(K,\bm{\theta}) \leq \epsilon^2$ under the following conditions:
\begin{enumerate}
\item[(L1)] There exists a positive number $P$ such that $\mathbb{E}[\| \nabla L (\bm{\theta}_k) \|^2] \leq P^2$, where $\bm{\theta}_k = (\theta_{k,i})$ is the sequence generated by Algorithm \ref{algo:1}.
\item[(L2)] For all $\bm{\theta} = (\theta_i) \in \mathcal{S}$, there exists a positive number $\mathrm{Dist}(\bm{\theta})$ such that $\max_{i\in [d]} \sup \{ (\theta_{k,i} - \theta_i)^2 \colon k \in \mathbb{N} \} \leq \mathrm{Dist}(\bm{\theta})$.
\end{enumerate}

Condition (L1) was used in previous studies \cite[A2]{chen2019} and \cite[Theorem 2.2]{ada} to analyze deep learning optimizers for nonconvex optimization. Here, we use (L1) to provide the upper bounds of $\mathbb{E}[\|\bm{\mathsf{d}}_k\|_{\mathsf{H}_k}^2]$ and $\mathbb{E}[\|\bm{m}_k\|^2]$ (see Lemma \ref{lem:bdd} for details); i.e., 
\begin{align}\label{m}
\mathbb{E}\left[ \| \bm{m}_k \|^2 \right]
\leq \frac{\sigma^2}{b} + P^2
\text{ and } 
\mathbb{E}\left[\|\bm{\mathsf{d}}_k\|_{\mathsf{H}_k}^2 \right]
\leq \frac{\sigma^2 b^{-1} + P^2}{\tilde{\gamma} h_0^*},
\end{align}
where (S2)\eqref{sigma} and (A1) are assumed, $\tilde{\gamma} := 1 - \gamma$, and $h_0^* := \min_{i\in [d]} h_{0,i}$. This formulation implies that $A_K$ in \eqref{key0} is bounded above (see Theorems \ref{thm:main} and \ref{thm:1} for details).

Condition (L2) is assumed for both convex and nonconvex optimization, e.g., \cite[p.1574]{nem2009}, \cite[Theorem 4.1]{adam}, \cite[p.2]{reddi2018}, \cite[Theorem 4]{luo2019}, and \cite[Theorem 2.1]{ada}. Here, we use (L2) and (A2) to provide the upper bounds of $\Gamma_K$ and $B_K$ in \eqref{key0} (see Theorem \ref{thm:main} for details); i.e., 
\begin{align*}
\Gamma_K \leq d \mathrm{Dist}(\bm{\theta}) H
\text{ and } 
B_K \leq \sqrt{d \mathrm{Dist}(\bm{\theta})(\sigma^2 + P^2)} K,
\end{align*}
where (S2)\eqref{sigma} and (L1) are assumed and $H := \max_{i\in [d]} H_i$. Therefore, we have the following theorem (see the Appendix for detailed proof): 
 
\begin{thm}\label{thm:1}
Suppose that (S1)--(S3), (A1)--(A2), and (L1)--(L2) hold. Then, Algorithm \ref{algo:1} satisfies that, for all $\bm{\theta} \in \mathcal{S}$ and all $K \geq 1$,
\begin{align*}
\mathrm{V}(K,\bm{\theta}) 
\leq \frac{A}{K} + \frac{B}{b} + C + D,
\end{align*}
where $\tilde{\beta} := 1 - \beta$, $\tilde{\gamma} := 1 -\gamma$, $h_0^* := \min_{i\in [d]} h_{0,i}$,
\begin{align*}
&A = A(\alpha,\beta,\bm{\theta},d,H) = 
\frac{d \mathrm{Dist}(\bm{\theta}) H}{2 \alpha \tilde{\beta}},\\
&B = B(\alpha,\beta,\gamma,\sigma^2,h_0^*) = 
\frac{\sigma^2 \alpha}{2 \tilde{\beta} \tilde{\gamma}^2 h_0^*},\\
&C = C(\alpha,\beta,\gamma,P^2,h_0^*) =
\frac{P^2 \alpha}{2 \tilde{\beta} \tilde{\gamma}^2 h_0^*},\\
&D = D(\beta,\bm{\theta},d,\sigma^2, P^2) = 
\frac{\sqrt{d\mathrm{Dist}(\bm{\theta})(\sigma^2 + P^2)} \beta}{\tilde{\beta}}.
\end{align*}
Moreover, the lower bound $\underline{K}_{\epsilon}$ of the number of steps $K_\epsilon$ satisfying $\mathrm{V}(K,\bm{\theta}) \leq \epsilon^2$ 
for Algorithm \ref{algo:1} is such that, for all $b > B/\{\epsilon^2 - (C + D)\} \geq 0$,
\begin{align*}
\underline{K}_{\epsilon}(b) 
:= \frac{A b}{\{\epsilon^2 - (C + D)\}b - B} \leq K_\epsilon.
\end{align*}
The lower bound $\underline{K}_{\epsilon}$ is monotone decreasing and convex: 
\begin{align*}
\underline{K}_{\epsilon}^\star := 
\frac{A}{\epsilon^2 - (C+D)}
< \underline{K}_{\epsilon}(b)
\end{align*}
for all $b > B/\{\epsilon^2 - (C + D)\}$.
\end{thm}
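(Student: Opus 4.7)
The plan is to begin from the three-term decomposition \eqref{key0}, bound each of $\Gamma_K$, $A_K$, and $B_K$ separately using (A1)--(A2), (L1)--(L2), together with the per-step bounds \eqref{m} on $\|\bm{m}_k\|$ and $\|\bm{\mathsf{d}}_k\|_{\mathsf{H}_k}$, then divide by $K$ and collect constants. The $\Gamma_K$ piece will produce the $A/K$ contribution, the $A_K$ piece will produce $B/b + C$, and the $B_K$ piece will produce the $K$-uniform constant $D$.

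For $\Gamma_K$, the key step is to exploit the diagonal form $\|\bm{x}\|_{\mathsf{H}_k}^2 = \sum_i h_{k,i} x_i^2$ and the monotonicity $h_{k+1,i} \geq h_{k,i}$ from (A1). After decoupling per coordinate, I would reindex so that the positive piece at step $k+1$ (with matrix $\mathsf{H}_{k+1}$) lines up with the negative piece at step $k$ (with matrix $\mathsf{H}_k$); the mismatch is a non-negative $(h_{k+1,i}-h_{k,i})$-weighted residual that, after bounding $(\theta_{k+1,i}-\theta_i)^2 \leq \mathrm{Dist}(\bm{\theta})$ by (L2), telescopes in $k$ to a single endpoint controlled by $\mathbb{E}[h_{K,i}] \leq H_i \leq H$ from (A2). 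This yields $\Gamma_K \leq d\,\mathrm{Dist}(\bm{\theta}) H$, independent of $K$. For $A_K$, I substitute \eqref{m} and use $\sum_{k=1}^K \tilde{\gamma}_k \leq K$ to get $A_K \leq K(\sigma^2/b + P^2)/(\tilde{\gamma}h_0^*)$; the factor $\alpha/(2\tilde{\beta}K)$ in \eqref{key0} then produces the $B/b + C$ part with the stated constants. For $B_K$, Cauchy--Schwarz on each summand gives $|\mathbb{E}[(\bm{\theta}-\bm{\theta}_k)^\top\bm{m}_{k-1}]| \leq \sqrt{\mathbb{E}[\|\bm{\theta}-\bm{\theta}_k\|^2]\,\mathbb{E}[\|\bm{m}_{k-1}\|^2]} \leq \sqrt{d\,\mathrm{Dist}(\bm{\theta})(\sigma^2+P^2)}$, using (L2) coordinate-wise and \eqref{m}; summing over $k$ and dividing by $K$ produces the uniform constant $D$.

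Combining the three bounds yields $\mathrm{V}(K,\bm{\theta}) \leq A/K + B/b + C + D$. Requiring $A/K + B/b + C + D \leq \epsilon^2$ rearranges to $A/K \leq [\{\epsilon^2-(C+D)\}b - B]/b$; on the region $b > B/\{\epsilon^2-(C+D)\}$ the right-hand side is positive, and inverting gives $K \geq Ab/[\{\epsilon^2-(C+D)\}b - B] = \underline{K}_\epsilon(b)$, hence $\underline{K}_\epsilon(b) \leq K_\epsilon$. Writing $\alpha' := \epsilon^2-(C+D)$, direct differentiation shows $\underline{K}_\epsilon'(b) = -AB/(\alpha'b-B)^2 < 0$ and $\underline{K}_\epsilon''(b) = 2AB\alpha'/(\alpha'b-B)^3 > 0$ on $b > B/\alpha'$, so $\underline{K}_\epsilon$ is monotone decreasing and convex, with $\lim_{b\to\infty}\underline{K}_\epsilon(b) = A/\alpha' = \underline{K}_\epsilon^\star$ and therefore $\underline{K}_\epsilon^\star < \underline{K}_\epsilon(b)$ throughout the domain.

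The main obstacle is the bound on $\Gamma_K$: because the matrix in the negative part of each summand is $\mathsf{H}_k$ rather than $\mathsf{H}_{k+1}$, a naive telescoping does not close, and one has to descend to the coordinate-wise picture, use (A1) to represent the mismatch as a non-negative increment, and then invoke (A2) in expectation on the telescoped endpoint. Everything after that --- the $A_K$ and $B_K$ bounds, and the one-variable calculus for $\underline{K}_\epsilon$ --- is routine.
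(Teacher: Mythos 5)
Your proposal is correct and follows essentially the same route as the paper: the decomposition of \eqref{key0} into $\Gamma_K$, $A_K$, $B_K$, the coordinate-wise regrouping of $\Gamma_K$ using (A1) and (L2) so that the non-negative increments telescope to an endpoint controlled by (A2), the substitution of the moment bounds \eqref{m} into $A_K$, and Cauchy--Schwarz on $B_K$, followed by the same algebra and one-variable calculus for $\underline{K}_\epsilon$. The only cosmetic imprecision is that the mismatch in the $\Gamma_K$ telescoping is weighted by $\tilde{\gamma}_{k+1}h_{k+1,i}-\tilde{\gamma}_k h_{k,i}$ rather than $h_{k+1,i}-h_{k,i}$, but since both $\tilde{\gamma}_k$ and $h_{k,i}$ are nondecreasing this does not affect the non-negativity or the telescoped endpoint.
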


The minimization of $\underline{K}_\epsilon (b) b$ is as follows (The proof of Theorem \ref{thm:1_1} is given in the Appendix):

\begin{thm}\label{thm:1_1}
Suppose that the assumptions in Theorem \ref{thm:1} hold and consider Algorithm \ref{algo:1}. Then, there exists 
\begin{align*}
b_* := \frac{2B}{\epsilon^2 - (C+D)} 
\end{align*}
such that $b_*$ minimizes a convex function $\underline{K}_{\epsilon}(b) b$; i.e., for all $b > B/\{\epsilon^2 - (C + D)\}$,
\begin{align*}
\frac{4AB}{\{\epsilon^2 - (C+D)\}^2} 
&= \underline{K}_{\epsilon}(b_*) b_* 
\leq \underline{K}_{\epsilon}(b) b\\ 
&= 
\frac{Ab^2}{\{\epsilon^2 - (C+D)\}b - B}.
\end{align*}
\end{thm}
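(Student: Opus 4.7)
The plan is to reduce the minimization of $\underline{K}_\epsilon(b)b = \frac{Ab^2}{\{\epsilon^2 - (C+D)\}b - B}$ to the classical inequality $u + B^2/u \geq 2B$ via an affine substitution, which will simultaneously give the minimizer, the minimum value, and convexity in one stroke. Write $a := \epsilon^2 - (C+D)$, which is positive on the relevant domain $b > B/a$ by the hypotheses of Theorem~\ref{thm:1}, and change variables to $u := a b - B > 0$. A short algebraic manipulation gives
\begin{align*}
\underline{K}_\epsilon(b) b
= \frac{A b^2}{a b - B}
= \frac{A}{a^2}\,\frac{(u+B)^2}{u}
= \frac{A}{a^2}\left( u + 2B + \frac{B^2}{u} \right).
\end{align*}

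Next I would apply AM--GM to $u + B^2/u \geq 2B$, with equality iff $u = B$, which translates back to $a b = 2B$, i.e., $b = 2B/a = b_*$. Plugging in yields
\begin{align*}
\underline{K}_\epsilon(b_*)\, b_*
= \frac{A}{a^2}(B + 2B + B)
= \frac{4AB}{\{\epsilon^2 - (C+D)\}^2},
\end{align*}
establishing both the identity of the minimizer $b_*$ and the claimed minimum value.

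For the convexity claim I would observe that $u \mapsto u + B^2/u$ has second derivative $2B^2/u^3 > 0$ on $(0,\infty)$, so it is strictly convex; since $u = ab - B$ is an affine function of $b$, strict convexity is preserved and $\underline{K}_\epsilon(b) b$ is convex on the domain $b > B/a$. (Alternatively, one can compute $f''(b)$ directly from the quotient form, but the substitution route avoids the tedious differentiation and makes the AM--GM structure transparent.) I expect no real obstacle: the only thing to be careful about is ensuring the substitution keeps the domain intact and that the condition $\epsilon^2 > C + D$ (implicit in Theorem~\ref{thm:1}) makes $a > 0$, so that the map $b \mapsto u$ is a bijection between $(B/a, \infty)$ and $(0, \infty)$; everything else is routine.
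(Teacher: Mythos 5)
Your proof is correct, and it takes a genuinely different route from the paper's. The paper differentiates the quotient $\underline{K}_\epsilon(b)b = Ab^2/[\{\epsilon^2-(C+D)\}b-B]$ directly, verifies that the second derivative is nonnegative on $b > B/\{\epsilon^2-(C+D)\}$ to get convexity, and locates $b_*$ as the unique zero of the first derivative, recording the sign change of $\mathrm{d}(\underline{K}_\epsilon(b)b)/\mathrm{d}b$ across $b_*$. You instead substitute $u = \{\epsilon^2-(C+D)\}b - B$ to rewrite the objective as $\frac{A}{a^2}\bigl(u + 2B + \frac{B^2}{u}\bigr)$ and invoke AM--GM, which delivers the minimizer, the minimum value $4AB/\{\epsilon^2-(C+D)\}^2$, and (via strict convexity of $u \mapsto u + B^2/u$ composed with an affine map) the convexity claim all at once. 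Your algebra checks out, the bijection between $(B/a,\infty)$ and $(0,\infty)$ is handled, and you correctly note that $\epsilon^2 > C+D$ is implicit in Theorem \ref{thm:1}. What the paper's calculus route buys is the explicit sign pattern of the derivative on either side of $b_*$ (strict decrease then strict increase), which it uses for the qualitative picture in Figure \ref{fig:2}; what your route buys is a cleaner derivation that avoids the quotient-rule computation and makes the source of the factor $4AB$ transparent. Either argument is acceptable.
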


\subsection{Upper bound of $K$}
We provide the upper bound of $K$ satisfying $\delta \epsilon^2 \leq \mathrm{V}(K,\bm{\theta})$ under the following conditions: 
\begin{enumerate}
\item[(U1)] There exists $c_1 \in [0,1]$ such that $\mathbb{E}[\|\bm{\mathsf{d}}_k\|_{\mathsf{H}_k}^2] \geq c_1 \sigma^2/b$.
\item[(U2)] There exist $\bm{\theta}^* \in \mathrm{VI}(\mathcal{S},\nabla L)$ and $c_2 \geq 0$ such that 
$\mathbb{E}[(\bm{\theta}^* - \bm{\theta}_k)^\top \bm{m}_{k-1}] \geq - c_2 (\sigma^2/b + P^2)$.
\item[(U3)] For $\bm{\theta}^* \in \mathrm{VI}(\mathcal{S},\nabla L)$, there exists a positive number $X(\bm{\theta}^*)$ such that, for all $k \geq 1$, $\tilde{\gamma} \mathbb{E}[\|\bm{\theta}_1 - \bm{\theta}^*\|_{\mathsf{H}_1}^2] - \mathbb{E}[\|\bm{\theta}_{k+1} - \bm{\theta}^*\|_{\mathsf{H}_k}^2] \geq X(\bm{\theta}^*)$, where $\tilde{\gamma} := 1 - \gamma$.
\end{enumerate}
We consider the case where $\mathsf{H}_k$ is the identity matrix needed to justify (U1). The definition of $\bm{m}_k := \beta \bm{m}_{k-1} + (1-\beta) \nabla L_{B_k}(\bm{\theta}_k)$ implies that $\bm{m}_k$ depends on $\nabla L_{B_k}(\bm{\theta}_k)$. Here, we assume the following condition, which is stronger than (S2)\eqref{sigma}: there exists $c_1 \in [0,1]$ such that 
\begin{align}\label{sigma1}
c_1 \sigma^2 \leq \mathbb{E}_{\xi_k}\left[ \left\|\mathsf{G}_{\xi_k}(\bm{\theta}_k) - 
\nabla L(\bm{\theta}_k) \right\|^2 \right] \leq \sigma^2.
\end{align}
Then, (S2)\eqref{gradient} and \eqref{sigma1} ensure that 
\begin{align*}
\mathbb{E}\left[ \| \nabla L_{B_k}(\bm{\theta}_k)\|^2 \big| \bm{\theta}_k \right]
&\geq 
\mathbb{E}\left[ \| \nabla L_{B_k}(\bm{\theta}_k) - \nabla L (\bm{\theta}_k)\|^2 \big| \bm{\theta}_k \right]\\
&\geq \frac{c_1 \sigma^2}{b}.
\end{align*}
From $\tilde{\gamma}_k = 1 - \gamma^{k+1} \leq 1$, we have that $\|\bm{\mathsf{d}}_k\|^2 = \tilde{\gamma}_k^{-2} \|\bm{m}_k\|^2 \geq \|\bm{m}_k\|^2$ ($k\in\mathbb{N}$). Accordingly, the lower bound of $\|\bm{\mathsf{d}}_k\|^2$ depends on $c_1 \sigma^2/b$.

Algorithm \ref{algo:1} for nonconvex optimization satisfies that there exist positive numbers $C_1$ and $C_2$ such that, for all $\bm{\theta} \in \mathcal{S}$, 
\begin{align}\label{iiduka}
\begin{split}
&\liminf_{k \to + \infty} \mathbb{E}\left[ (\bm{\theta}_k - \bm{\theta})^\top \nabla L (\bm{\theta}_k) \right]
\leq C_1 \alpha + C_2 \beta,\\
&\mathrm{V}(K,\bm{\theta})
\leq \mathcal{O}\left(\frac{1}{K} \right) + C_1 \alpha + C_2 \beta
\end{split}
\end{align}
(see \cite[Theorem 1]{iiduka2021}).
The sequence $(\bm{\theta}_k)_{k\in\mathbb{N}}$ generated by Algorithm \ref{algo:1} can thus approximate a local minimizer $\bm{\theta}^* \in \mathrm{VI}(\mathcal{S},\nabla L)$. 
Meanwhile, we have that 
\begin{align*}
&(\bm{\theta}^* - \bm{\theta}_k)^\top \bm{m}_{k-1}\\
&= 
\frac{1}{2} 
\left\{
\|\bm{\theta}^* - \bm{\theta}_k\|^2 + \|\bm{m}_{k-1}\|^2
- \|(\bm{\theta}^* - \bm{\theta}_k) - \bm{m}_{k-1}\|^2
\right\}\\
&\geq 
-\frac{1}{2} \|(\bm{\theta}^* - \bm{\theta}_k) - \bm{m}_{k-1}\|^2.
\end{align*}
Accordingly, for a sufficiently large $k$, the lower bound of $(\bm{\theta}^* - \bm{\theta}_k)^\top \bm{m}_{k-1}$ depends on 
$- \mathbb{E}[\|\bm{m}_{k-1}\|^2] \geq - (\sigma^2/b + P^2)$,
where (S2)\eqref{sigma} and (L1) are assumed (see \eqref{m}).
Hence, we assume (U2) for Algorithm \ref{algo:1}. 

Condition (U3) implies that $\mathbb{E}[\|\bm{\theta}_{k+1} - \bm{\theta}^*\|_{\mathsf{H}_k}^2] < \mathbb{E}[\|\bm{\theta}_1 - \bm{\theta}^*\|_{\mathsf{H}_1}^2] < + \infty$, which is stronger than (L2), and that $\bm{\theta}_{k+1}$ approximates more appropriately $\bm{\theta}^*$ than $\bm{\theta}_1$. Since sequence $(\bm{\theta}_k)_{k\in\mathbb{N}}$ generated by Algorithm \ref{algo:1} can approximate a local minimizer $\bm{\theta}^* \in \mathrm{VI}(\mathcal{S},\nabla L)$ (see \eqref{iiduka}), (U3) is also adequate for consideration of the upper bound of $K$. 

\begin{thm}\label{thm:2}
Suppose that (S1)--(S3), (A1), (L1), and (U1)--(U3) hold. Then, Algorithm \ref{algo:1} satisfies that, for all $K \geq 1$,
\begin{align*}
\mathrm{V}(K,\bm{\theta}^*) \geq 
\frac{E}{K}
+ 
\frac{F}{b}
- 
G,
\end{align*}
where $\tilde{\beta} := 1 - \beta$, $\tilde{\gamma} := 1 - \gamma$,
\begin{align*}
&E = E(\alpha,\beta,\bm{\theta}^*) = \frac{X (\bm{\theta}^*)}{2 \alpha \tilde{\beta}},\\
&F = F(\alpha,\beta,\gamma,\sigma^2, c_1, c_2) 
= \frac{\sigma^2 (c_1 \alpha \tilde{\gamma} - 2 c_2 \beta)}{2 \tilde{\beta}}, \\
&G = G (\beta, c_2, P^2) = \frac{c_2 \beta P^2}{\tilde{\beta}}.
\end{align*}
Moreover, the upper bound $\overline{K}_{\epsilon,\delta}$ of the number of steps $K_{\epsilon,\delta}$ satisfying $\delta \epsilon^2 \leq \mathrm{V}(K,\bm{\theta}^*)$ 
for Algorithm \ref{algo:1} is such that, for all $b > F/(\delta \epsilon^2 + G) \geq 0$,
\begin{align*}
\overline{K}_{\epsilon,\delta}(b) 
:= \frac{E b}{(\delta \epsilon^2 + G) b - F} \geq K_{\epsilon,\delta}.
\end{align*}
The upper bound $\overline{K}_{\epsilon,\delta}$ is monotone decreasing and convex: 
\begin{align*}
\overline{K}_{\epsilon,\delta}^\star := 
\frac{E}{\delta \epsilon^2 + G}
< \overline{K}_{\epsilon,\delta}(b)
\end{align*}
for all $b > F/(\delta \epsilon^2 + G)$.
\end{thm}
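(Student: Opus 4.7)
The approach parallels the lower-bound analysis for Theorem \ref{thm:1} but with all inequalities reversed: I would start from the exact identity \eqref{key0} applied at $\bm{\theta} = \bm{\theta}^*$, replace each of the three pieces $\Gamma_K$, $A_K$, and $B_K$ by a suitable lower bound, and then invert the resulting inequality on $\mathrm{V}(K,\bm{\theta}^*)$ to extract an upper bound on every $K$ for which $\delta\epsilon^2 \leq \mathrm{V}(K,\bm{\theta}^*)$ can hold.

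The delicate piece is the $\Gamma_K$ lower bound, which requires a careful reindexing rather than a direct telescope. Writing $u_k := \mathbb{E}[\|\bm{\theta}_k - \bm{\theta}^*\|_{\mathsf{H}_k}^2]$ and $v_k := \mathbb{E}[\|\bm{\theta}_{k+1} - \bm{\theta}^*\|_{\mathsf{H}_k}^2]$, I rearrange
\begin{align*}
\Gamma_K = \tilde{\gamma}_1 u_1 + \sum_{k=2}^K \bigl( \tilde{\gamma}_k u_k - \tilde{\gamma}_{k-1} v_{k-1} \bigr) - \tilde{\gamma}_K v_K.
\end{align*}
Assumption (A1) gives $\mathsf{H}_k \succeq \mathsf{H}_{k-1}$, so $u_k \geq v_{k-1}$ (both evaluate $\bm{\theta}_k - \bm{\theta}^*$), and $\tilde{\gamma}_k = 1 - \gamma^{k+1}$ is non-decreasing in $k$ since $\gamma \in [0,1)$; hence each middle parenthesis is non-negative. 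Using $\tilde{\gamma}_1 = 1 - \gamma^2 \geq \tilde{\gamma}$ and $\tilde{\gamma}_K \leq 1$ on the surviving boundary terms, and then applying (U3) with $k = K$, collapses the whole expression to $\Gamma_K \geq X(\bm{\theta}^*)$. The remaining two pieces are routine: (U1) together with $\tilde{\gamma}_k \geq \tilde{\gamma}$ yields $A_K \geq c_1 \sigma^2 \tilde{\gamma} K / b$, and (U2) yields $B_K \geq -c_2(\sigma^2/b + P^2)K$. Substituting all three into \eqref{key0} and grouping terms produces
\begin{align*}
\mathrm{V}(K,\bm{\theta}^*) \geq \frac{X(\bm{\theta}^*)}{2\alpha\tilde{\beta} K} + \frac{\sigma^2(c_1 \alpha \tilde{\gamma} - 2 c_2 \beta)}{2\tilde{\beta}\, b} - \frac{c_2 \beta P^2}{\tilde{\beta}} = \frac{E}{K} + \frac{F}{b} - G,
\end{align*}
matching the stated constants.

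The upper-bound claim then follows by elementary algebra: rearranging $\delta \epsilon^2 \leq E/K + F/b - G$ as $(\delta\epsilon^2 + G - F/b) K \leq E$ and noting that the coefficient on $K$ is strictly positive precisely when $b > F/(\delta\epsilon^2 + G)$, one obtains $K \leq Eb/\bigl((\delta\epsilon^2 + G)b - F\bigr) = \overline{K}_{\epsilon,\delta}(b)$. For the monotonicity and convexity tail I would split
\begin{align*}
\overline{K}_{\epsilon,\delta}(b) = \frac{E}{\delta\epsilon^2 + G} + \frac{EF}{(\delta\epsilon^2 + G)\bigl((\delta\epsilon^2 + G)b - F\bigr)};
\end{align*}
the second summand is strictly positive, strictly decreasing, and convex on the admissible interval (directly, or via $\overline{K}_{\epsilon,\delta}''(b) = 2EF(\delta\epsilon^2 + G)/\bigl((\delta\epsilon^2 + G)b - F\bigr)^3 > 0$), which gives both $\overline{K}_{\epsilon,\delta}^\star < \overline{K}_{\epsilon,\delta}(b)$ and the convexity of $\overline{K}_{\epsilon,\delta}$ in one stroke.

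The main obstacle I anticipate is the $\Gamma_K$ step. Two quantities vary with $k$ simultaneously, the weight $\tilde{\gamma}_k$ and the metric $\mathsf{H}_k$, so the sum is not telescoping in any naive sense; the proof must exploit the monotonicity of both (from (A1) and from $\gamma \in [0,1)$) to discard the cross terms as non-negative before (U3) can be invoked on the two endpoint terms. Once that lower bound is secured, every subsequent step is essentially bookkeeping: term-wise substitution, sign tracking, and the elementary rearrangement that inverts a bound of the form $E/K + F/b - G$ into a bound on $K$.
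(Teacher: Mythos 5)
Your proposal is correct and follows essentially the same route as the paper: the same rearrangement of $\Gamma_K$ into boundary terms plus a non-negative middle sum (the paper's \eqref{LAM}--\eqref{DELTA}, controlled by (A1) and the monotonicity of $\tilde{\gamma}_k$, then closed with (U3)), the same term-wise lower bounds on $A_K$ and $B_K$ from (U1)--(U2), and the same algebraic inversion to get $\overline{K}_{\epsilon,\delta}(b)$. The only cosmetic difference is that you establish monotonicity and convexity via a partial-fraction split rather than the paper's direct computation of the first and second derivatives; both are equivalent.
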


The minimization of $\overline{K}_{\epsilon,\delta} (b) b$ is as follows (The proof of Theorem \ref{thm:2_1} is given in the Appendix):

\begin{thm}\label{thm:2_1}
Suppose that the assumptions in Theorem \ref{thm:2} hold and consider Algorithm \ref{algo:1}. Then, there exists 
\begin{align*}
b^* := \frac{2F}{\delta \epsilon^2 + G} 
\end{align*}
such that $b^*$ minimizes a convex function $\overline{K}_{\epsilon,\delta}(b) b$; i.e., for all $b > F/(\delta \epsilon^2 + G)$,
\begin{align*}
\frac{4EF}{(\delta \epsilon^2 + G)^2} 
= 
\overline{K}_{\epsilon,\delta}(b^*) b^* 
\leq \overline{K}_{\epsilon,\delta}(b) b
= 
\frac{E b^2}{(\delta \epsilon^2 + G) b - F}.
\end{align*}
\end{thm}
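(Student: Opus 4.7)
The plan is to reduce the assertion to a routine single-variable optimization problem on the open half-line $b > F/(\delta\epsilon^2 + G)$. Writing $p := \delta\epsilon^2 + G > 0$ and $q := F \geq 0$, the function of interest is $f(b) := \overline{K}_{\epsilon,\delta}(b)\, b = \frac{Eb^2}{pb - q}$, and I need to prove that $f$ is convex on $(q/p, \infty)$ and attains its unique minimum at $b^* = 2q/p$ with value $4Eq/p^2$.

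The key simplification is the affine change of variable $u := pb - q$, which parametrizes the domain by $u \in (0,\infty)$. A direct expansion $b^2 = (u+q)^2/p^2$ yields
\begin{equation*}
f(b) \;=\; \frac{E}{p^2}\left(u + 2q + \frac{q^2}{u}\right).
\end{equation*}
On $(0,\infty)$ the map $u \mapsto u + q^2/u$ is strictly convex (its second derivative $2q^2/u^3$ is positive), so composing with the affine substitution $b \mapsto pb - q$ gives convexity of $f$ in $b$. The AM--GM inequality then yields $u + q^2/u \geq 2q$, with equality iff $u = q$. Translating back through $u = pb - q$, the unique minimizer is $b^* = 2q/p = 2F/(\delta\epsilon^2 + G)$, and substitution gives the minimum value $(E/p^2)(q + 2q + q) = 4Eq/p^2 = 4EF/(\delta\epsilon^2 + G)^2$, exactly as claimed.

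The argument mirrors the one used for Theorem \ref{thm:1_1} upon swapping the roles of $\{A, B, \epsilon^2 - (C+D)\}$ with $\{E, F, \delta\epsilon^2 + G\}$, so no new technique is needed. The only genuine obstacle is bookkeeping around positivity: one must verify that $p = \delta\epsilon^2 + G > 0$ and $q = F \geq 0$ under the hypotheses of Theorem \ref{thm:2} (the latter requires $c_1 \alpha \tilde{\gamma} \geq 2 c_2 \beta$, which is implicit in the statement $b > F/(\delta\epsilon^2 + G) \geq 0$), and that the minimizer lies strictly inside the admissible domain, which is immediate from $b^* - q/p = q/p > 0$ when $q > 0$ (the degenerate case $q = 0$ making $f(b) = Eb/p$ monotone with infimum zero and no obstruction). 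Beyond this, everything reduces to the elementary substitution and AM--GM step above.
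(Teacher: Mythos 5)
Your proof is correct, but it takes a different route from the paper's. The paper's proof of Theorem \ref{thm:2_1} differentiates $\overline{K}_{\epsilon,\delta}(b)b = Eb^2/\{(\delta\epsilon^2+G)b-F\}$ twice as a rational function of $b$, verifies that the second derivative is nonnegative on the admissible half-line, and locates the minimizer by finding where the first derivative changes sign, namely at $b^* = 2F/(\delta\epsilon^2+G)$. You instead substitute $u = (\delta\epsilon^2+G)b - F$, rewrite the objective as $\frac{E}{(\delta\epsilon^2+G)^2}\left(u + 2F + \frac{F^2}{u}\right)$ on $u \in (0,\infty)$, and invoke AM--GM to get the minimum $4EF/(\delta\epsilon^2+G)^2$ at $u = F$, with convexity following from convexity of $u \mapsto u + F^2/u$ composed with an affine map. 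Your route avoids the quotient-rule bookkeeping entirely and makes the location and value of the minimum transparent in one line; it also identifies uniqueness of the minimizer (equality case of AM--GM), which the paper's sign analysis gives only implicitly. The paper's derivative computation, on the other hand, directly delivers the monotonicity statement (decreasing for $b<b^*$, increasing for $b>b^*$) that it records explicitly. You are also more careful than the paper about the degenerate case $F=0$ and about why $\delta\epsilon^2 + G > 0$ and $F \geq 0$ hold under the hypotheses; these points are harmless but worth having stated. Your appeal to the symmetry with Theorem \ref{thm:1_1} under the substitution $\{A,B,\epsilon^2-(C+D)\} \leftrightarrow \{E,F,\delta\epsilon^2+G\}$ matches how the paper organizes the two proofs as near-verbatim copies of one another.
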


\begin{figure*}[htbp]
\begin{tabular}{cc}
\begin{minipage}[t]{0.45\hsize}
\centering
\includegraphics[width=1\textwidth]{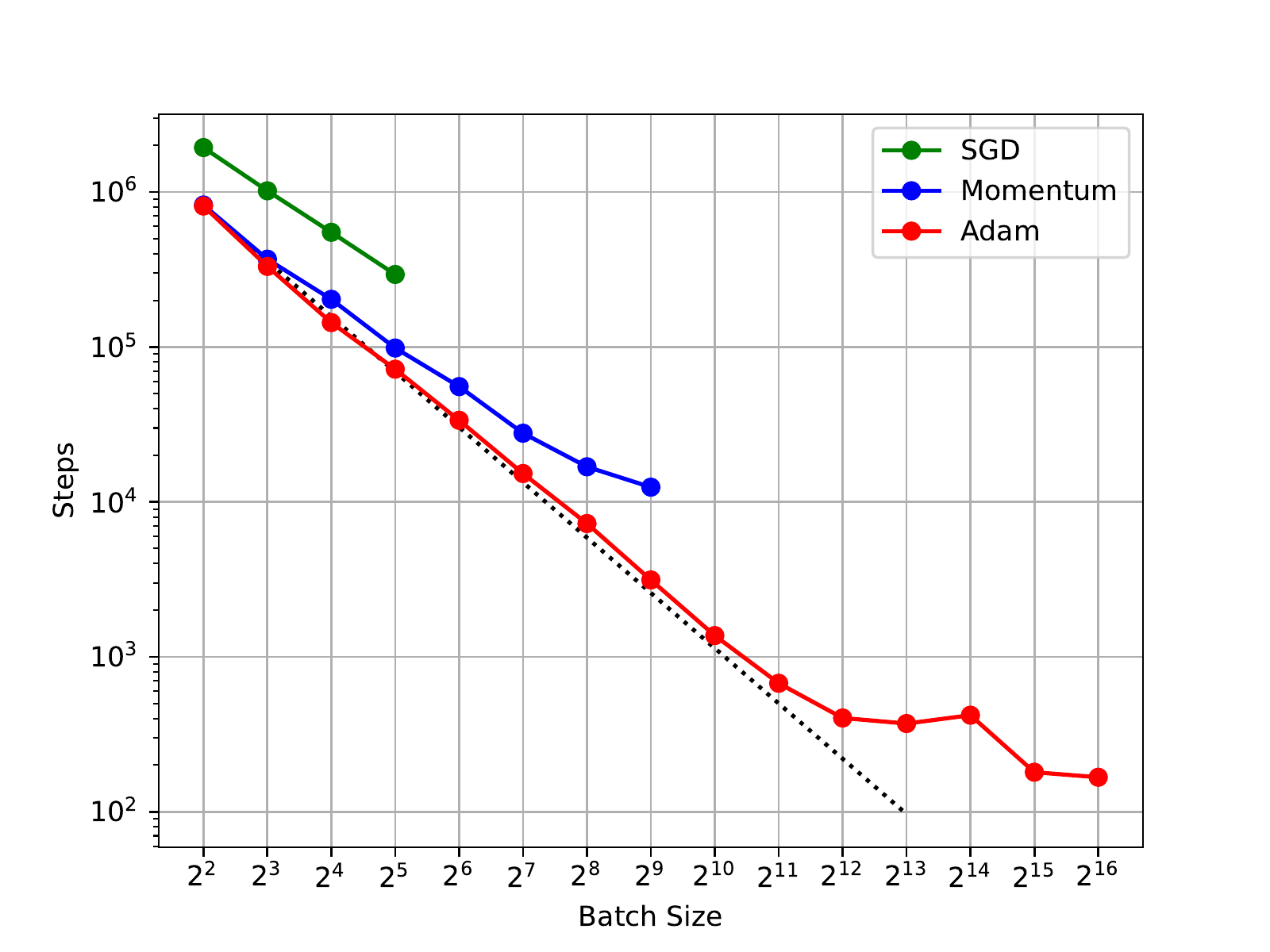}
\caption{Number of steps for SGD, Momentum, and Adam versus batch size needed to train ResNet-20 on CIFAR-10. There is an initial period of perfect scaling (indicated by dashed line) such that the number of steps $K$ needed for $L(\bm{\theta}_K) \leq 10^{-1}$ is inversely proportional to batch size $b$. However, Adam has critical batch size $b^\star = 2^{11}$.}
\label{fig1}
\end{minipage} &
\begin{minipage}[t]{0.45\hsize}
\centering
\includegraphics[width=1\textwidth]{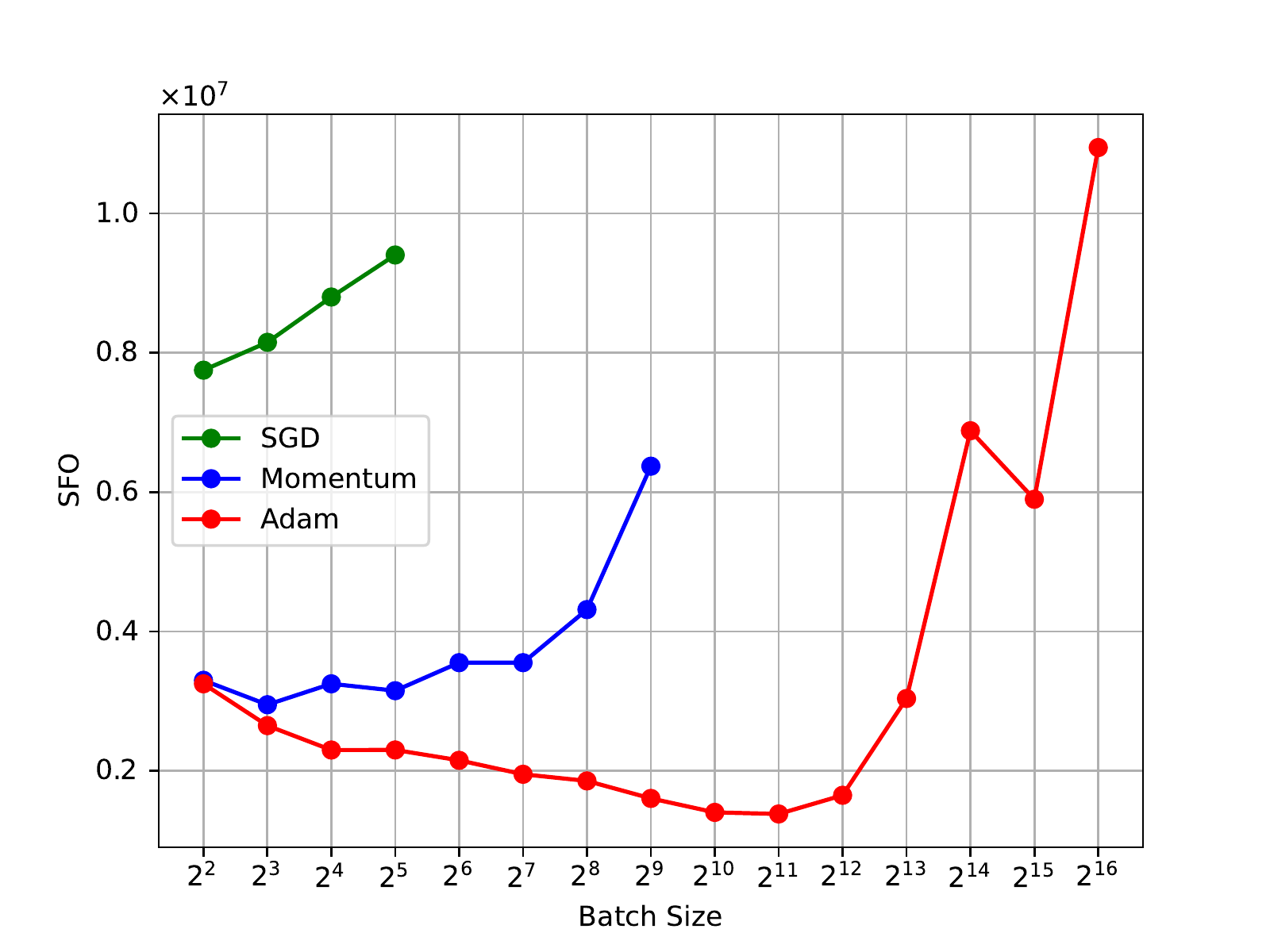}
\caption{SFO complexities for SGD, Momentum, and Adam versus batch size needed to train ResNet-20 on CIFAR-10. SFO complexity of Adam is minimum at critical batch size $b^\star = 2^{11}$, whereas SFO complexity for SGD and Momentum tends to increase with batch size.}
\label{fig2}
\end{minipage}
 \end{tabular}
\end{figure*} 

\begin{table*}[htbp]
\centering
\caption{Elapsed time and training accuracy of SGD when $L(\bm{\theta}_K) \leq 10^{-1}$ for training ResNet-20 on CIFAR-10}\label{table:sgd}
\begin{tabular}{lllllllll}
\bottomrule
\multicolumn{9}{c}{SGD} \\
Batch Size 
& $2^2$ 
& $2^3$ 
& $2^4$ 
& $2^5$ 
& $2^6$ 
& $2^7$ 
& $2^8$ 
& $2^9$ \\
\hline
Time (s) 
& 16983.64 
& 9103.76 
& 6176.19 
& 3759.25 
& --- 
& --- 
& --- 
& --- \\
Accuracy (\%) 
& 96.75 
& 96.69
& 96.66 
& 96.88 
& --- 
& --- 
& --- 
& --- \\
\bottomrule
\end{tabular}
\end{table*}

\begin{table*}[htbp]
\centering
\caption{Elapsed time and training accuracy of Momentum when $L(\bm{\theta}_K) \leq 10^{-1}$ for training ResNet-20 on CIFAR-10}\label{table:momentum}
\begin{tabular}{lllllllll}
\bottomrule
\multicolumn{9}{c}{Momentum} \\
Batch Size 
& $2^2$ 
& $2^3$ 
& $2^4$ 
& $2^5$ 
& $2^6$ 
& $2^7$ 
& $2^8$ 
& $2^9$ \\
\hline
Time (s) 
& 7978.90 
& 3837.72 
& 2520.82 
& 1458.70 
& 887.01 
& 678.66 
& 625.10 
& 866.65 \\
Accuracy (\%) 
& 96.49 
& 96.79
& 96.51 
& 96.72 
& 96.70 
& 96.94 
& 96.94 
& 98.34 \\
\bottomrule
\end{tabular}
\end{table*}

\begin{table*}[htbp]
\centering
\caption{Elapsed time and training accuracy of Adam when $L(\bm{\theta}_K) \leq 10^{-1}$ for training ResNet-20 on CIFAR-10}\label{table:adam1}
\begin{tabular}{lllllllll}
\bottomrule
\multicolumn{9}{c}{Adam} \\
Batch Size 
& $2^2$ 
& $2^3$ 
& $2^4$ 
& $2^5$ 
& $2^6$ 
& $2^7$ 
& $2^8$ 
& $2^9$ \\
\hline
Time (s) 
& 10601.78 
& 4405.73
& 2410.28
& 1314.01 
& 617.14 
& 487.75 
& 281.74 
& 225.03 \\
Accuracy (\%) 
& 96.46 
& 96.38
& 96.65 
& 96.53 
& 96.43 
& 96.68 
& 96.58 
& 96.72 \\
\bottomrule
\end{tabular}
\end{table*}

\begin{table*}[htbp]
\centering
\caption{Elapsed time and training accuracy of Adam when $L(\bm{\theta}_K) \leq 10^{-1}$ for training ResNet-20 on CIFAR-10}\label{table:adam2}
\begin{tabular}{llllllll}
\bottomrule
\multicolumn{8}{c}{Adam} \\
Batch Size 
& $2^{10}$ 
& $2^{11}$ 
& $2^{12}$ 
& $2^{13}$ 
& $2^{14}$ 
& $2^{15}$ 
& $2^{16}$ \\
\hline
Time (s) 
& 197.78 
& 195.40
& 233.70
& 349.81 
& 691.04 
& 644.19 
& 1148.68\\
Accuracy (\%) 
& 96.74 
& 97.21
& 97.54 
& 97.75 
& 97.51 
& 99.05 
& 99.03 \\
\bottomrule
\end{tabular}
\end{table*}

\subsection{Discussion}
To guarantee that $\underline{K}_{\epsilon} (b) b$ in Theorem \ref{thm:1_1} and $\overline{K}_{\epsilon,\delta} (b) b$ in Theorem \ref{thm:2_1} fit the actual SFO complexity $Kb$, we need to satisfy \eqref{condition}; that is, 
\begin{align*}
&\underbrace{(C+D) F}_{\approx (M_1 \alpha + M_2)(M_3 \alpha - M_4)} + 
\underbrace{BG}_{\approx M_5 \alpha} \approx (F - \delta B) \epsilon^2 
\text{ and}\\
&\underbrace{(C+D)}_{\approx M_1 \alpha + M_2} \sqrt{EF} 
+ G \sqrt{AB} \approx (\sqrt{EF}- \delta \sqrt{AB})\epsilon^2.
\end{align*}
Hence, if precisions $\epsilon$ and $\delta$ are small, then using a sufficiently small learning rate $\alpha$ could be used to approximate $K(b)b$. Indeed, small learning rates such as $10^{-2}$ and $10^{-3}$ have been practically used in previous numerical validations \cite{shallue2019}. The demonstration of the existence of $b_*$ and $b^*$ leads to the existence of the actual critical batch size $b^\star \approx b_* \approx b^*$.

\section{Numerical Examples}
\label{sec:4}
We evaluated the performances of SGD, Momentum, and Adam with different batch sizes for training ResNet-20 on the CIFAR-10 dataset with $n = 50000$. The metrics were the number of steps $K$ and the SFO complexity $Kb$ satisfying $L(\bm{\theta}_K) \leq 10^{-1}$, where $\bm{\theta}_K$ is generated for each of SGD, Momentum, and Adam using batch size $b$. The stopping condition was $200$ epochs. The experimental environment consisted of two Intel(R) Xeon(R) Gold 6148 2.4-GHz CPUs with 20 cores each, a 16-GB NVIDIA Tesla V100 900-Gbps GPU, and the Red Hat Enterprise Linux 7.6 OS. The code was written in Python 3.8.2 using the NumPy 1.17.3 and PyTorch 1.3.0 packages. A constant learning rate ($\alpha = 10^{-3}$) was commonly used. SGD used the identity matrix $\mathsf{H}_k$ and $\beta = \gamma = 0$, and Momentum used the identity matrix $\mathsf{H}_k$, $\beta = 0.9$, and $\gamma = 0$. Adam used $\mathsf{H}_k$ defined by Table \ref{table:ex}, $\beta = \gamma = 0.9$, and $\eta = \zeta = 0.999$ \cite{adam}.

Figure \ref{fig1} shows the number of steps for SGD, Momentum, and Adam versus the batch size. For SGD and Momentum, the number of steps $K$ needed for $L(\bm{\theta}_K) \leq 10^{-1}$ initially decreased. However, SGD with $b \geq 2^6$ and Momentum with $b \geq 2^{10}$ did not satisfy $L(\bm{\theta}_K) \leq 10^{-1}$ before the stopping condition was reached. Adam had an initial period of perfect scaling (indicated by dashed line) such that the number of steps $K$ needed for $L(\bm{\theta}_K) \leq 10^{-1}$ was inversely proportional to batch size $b$, and critical batch size $b^\star = 2^{11}$ such that $K$ was not inversely proportional to the batch size beyond $b^\star$; i.e., there were diminishing returns. 

Figure \ref{fig2} plots the SFO complexities for SGD, Momentum, and Adam versus the batch size. For SGD, SFO complexity was minimum at $b = 2^2$; for Momentum, it was minimum at $b = 2^3$. For Adam, SFO complexity was minimum at the critical batch size $b^\star = 2^{11}$. In particular, we determined that the SFO complexity of Adam with $b = 2^{10}$ was about $1404928$, that of Adam with $b^\star = 2^{11}$ was about $1382400$, and that of Adam with $b = 2^{12}$ was about $1650688$.

Tables \ref{table:sgd}, \ref{table:momentum}, \ref{table:adam1}, and \ref{table:adam2} show the elapsed time and the training accuracy for SGD, Momentum, and Adam when $L(\bm{\theta}_K) \leq 10^{-1}$ was achieved. Table \ref{table:sgd} shows that the elapsed time for SGD decreased with an increase in batch size. Table \ref{table:momentum} shows that the elapsed time for Momentum monotonically decreased for $b \leq 2^8$ and increased for $b = 2^9$ compared with that for $b = 2^8$. This is because the SFO complexity for Momentum for $b = 2^9$ was substantially higher than that for $b = 2^8$, as shown in Figure \ref{fig2}. Tables \ref{table:adam1} and \ref{table:adam2} show that the elapsed time for Adam monotonically decreased for $b \leq 2^{11}$ and that the elapsed time for critical batch size $b^\star = 2^{11}$ was the shortest. The elapsed time for $b \geq 2^{12}$ increased with the SFO complexity, as shown in Figure \ref{fig2}.

We also investigated the behaviors of SGD, Momentum, and Adam for a decaying learning rate ($\alpha_k = 10^{-3}/\sqrt{k}$) and a decaying learning rate ($\alpha_k$ defined by $\alpha_k = 10^{-2} - 10^{-2} (1-\alpha) k T^{-1}$ ($k \leq T$) or $\alpha_k = 10^{-2} \alpha$ ($k > T$)), where the decay factor was $\alpha = 10^{-3}$ and the number of training steps was $T = 10^4$ ($\alpha$ and $T$ were set on the basis of the results of a previous study \cite[Figure 14]{shallue2019}). None of the optimizers achieved $L(\bm{\theta}_K) \leq 10^{-1}$ before the stopping condition ($200$ epochs) was reached.

\section{Conclusion}
We have clarified the lower and upper bounds of the number of steps needed for nonconvex optimization of adaptive methods and have shown that there exist critical batch sizes $b_*$ and $b^*$ in the sense of minimizing the lower and upper bounds of the stochastic first-order oracle (SFO) complexities of adaptive methods. Previous numerical evaluations \cite{shallue2019,zhang2019} showed that the actual critical batch size $b^\star$ minimizes SFO complexity. Hence, if the lower and upper bounds of the SFO complexity fit the actual SFO complexity, then the existence of $b_*$ and $b^*$ guarantee the existence of the actual critical batch size $b^\star$. We also demonstrated that it would be sufficient to use small learning rates for an adaptive method to satisfy that the lower and upper bounds of SFO complexity fit the actual SFO complexity. Furthermore, we provided numerical examples supporting our theoretical results. They showed that Adam with a frequently used small learning rate ($10^{-3}$) had a critical batch size that minimizes SFO complexity.

\appendix
\section{Appendix}
\label{app:1}
Unless stated otherwise, all relationships between random variables are supported to hold almost surely. Let $S \in \mathbb{S}_{++}^d$. The $S$-inner product of $\mathbb{R}^d$ is defined for all $\bm{x}, \bm{y} \in \mathbb{R}^d$ by $\langle \bm{x},\bm{y} \rangle_S := \langle \bm{x}, S \bm{y} \rangle = \bm{x}^\top (S \bm{y})$, and the $S$-norm is defined by $\|\bm{x}\|_S := \sqrt{\langle \bm{x}, S \bm{x} \rangle}$.

\subsection{Lemmas and Theorems}
\begin{lem}\label{lem:1}
Suppose that (S1), (S2)\eqref{gradient}, and (S3) hold. Then, Algorithm \ref{algo:1} satisfies the following: for all $k\in\mathbb{N}$ and all $\bm{\theta} \in \mathcal{S}$,
\begin{align*}
\mathbb{E}\left[\| \bm{\theta}_{k+1} - \bm{\theta} \|_{\mathsf{H}_k}^2 \right]
&=
\mathbb{E}\left[\| \bm{\theta}_{k} - \bm{\theta} \|_{\mathsf{H}_k}^2 \right]
+ \alpha^2 \mathbb{E}\left[\| \bm{\mathsf{d}}_k \|_{\mathsf{H}_k}^2 \right]\\
&\quad + 2 \alpha \left\{
\frac{\beta}{{\tilde{\gamma}}_k} 
\mathbb{E}\left[ (\bm{\theta} - \bm{\theta}_k)^\top \bm{m}_{k-1} \right] 
+\frac{\tilde{\beta}}{{\tilde{\gamma}}_k} 
\mathbb{E}\left[ (\bm{\theta} - \bm{\theta}_k)^\top \nabla L (\bm{\theta}_k) \right]
\right\},
\end{align*}
where $\tilde{\beta} := 1 - \beta$ and ${\tilde{\gamma}}_k := 1 - {\gamma}^{k+1}$.
\end{lem}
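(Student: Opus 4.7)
The plan is to derive the identity by direct algebraic expansion starting from the one-step update rule, then use the defining relation between $\bm{\mathsf{d}}_k$ and $\bm{m}_k$ and the unbiasedness condition (S2)\eqref{gradient} to rewrite the cross term in terms of the full gradient $\nabla L(\bm{\theta}_k)$.

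First, I would write $\bm{\theta}_{k+1} - \bm{\theta} = (\bm{\theta}_k - \bm{\theta}) + \alpha \bm{\mathsf{d}}_k$ from step~7 of Algorithm~\ref{algo:1} and expand the $\mathsf{H}_k$-norm using the symmetry of $\mathsf{H}_k \in \mathbb{S}_{++}^d$:
\begin{align*}
\|\bm{\theta}_{k+1} - \bm{\theta}\|_{\mathsf{H}_k}^2
= \|\bm{\theta}_k - \bm{\theta}\|_{\mathsf{H}_k}^2
+ 2\alpha (\bm{\theta}_k - \bm{\theta})^\top \mathsf{H}_k \bm{\mathsf{d}}_k
+ \alpha^2 \|\bm{\mathsf{d}}_k\|_{\mathsf{H}_k}^2.
\end{align*}
Then I would use step~6, $\mathsf{H}_k \bm{\mathsf{d}}_k = -\hat{\bm{m}}_k$, and step~4, $\hat{\bm{m}}_k = \tilde{\gamma}_k^{-1} \bm{m}_k$, to turn the cross term into $-2\alpha \tilde{\gamma}_k^{-1}(\bm{\theta}_k - \bm{\theta})^\top \bm{m}_k = 2\alpha \tilde{\gamma}_k^{-1}(\bm{\theta} - \bm{\theta}_k)^\top \bm{m}_k$.

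Next, I would split $\bm{m}_k$ via step~3 as $\bm{m}_k = \beta \bm{m}_{k-1} + \tilde{\beta} \nabla L_{B_k}(\bm{\theta}_k)$, giving
\begin{align*}
(\bm{\theta} - \bm{\theta}_k)^\top \bm{m}_k
= \beta (\bm{\theta} - \bm{\theta}_k)^\top \bm{m}_{k-1}
+ \tilde{\beta} (\bm{\theta} - \bm{\theta}_k)^\top \nabla L_{B_k}(\bm{\theta}_k).
\end{align*}
Taking total expectation and conditioning on $\xi_{[k-1]}$ (so that $\bm{\theta}_k$ and $\bm{m}_{k-1}$ are measurable), I would apply (S3) together with (S2)\eqref{gradient} to get $\mathbb{E}_{\xi_k}[\nabla L_{B_k}(\bm{\theta}_k)\mid \bm{\theta}_k] = b^{-1}\sum_{i=1}^b \mathbb{E}_{\xi_{k,i}}[\mathsf{G}_{\xi_{k,i}}(\bm{\theta}_k) \mid \bm{\theta}_k] = \nabla L(\bm{\theta}_k)$. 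Hence by the tower property,
\begin{align*}
\mathbb{E}\bigl[(\bm{\theta} - \bm{\theta}_k)^\top \nabla L_{B_k}(\bm{\theta}_k)\bigr]
= \mathbb{E}\bigl[(\bm{\theta} - \bm{\theta}_k)^\top \nabla L(\bm{\theta}_k)\bigr].
\end{align*}
Substituting this back into the expanded expression and collecting terms yields exactly the claimed identity.

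The only delicate point, and hence the main place I would be careful, is the conditional expectation step: one must verify that $\bm{\theta}_k$ is a function of $\xi_{[k-1]}$ alone (so it is independent of $\xi_k$), which is guaranteed by the recursion and by the independence clause in (S2)\eqref{gradient}; all other steps are purely algebraic consequences of the definitions in Algorithm~\ref{algo:1} and the symmetry of $\mathsf{H}_k$. No smoothness of $L$ or assumption on $(\mathsf{H}_k)$ beyond being in $\mathbb{S}_{++}^d \cap \mathbb{D}^d$ is needed for this lemma.
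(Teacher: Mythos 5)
Your proposal is correct and follows essentially the same route as the paper's proof: expand the $\mathsf{H}_k$-norm of the update, use $\mathsf{H}_k\bm{\mathsf{d}}_k=-\hat{\bm{m}}_k=-\tilde{\gamma}_k^{-1}\bm{m}_k$ and the momentum recursion to split the cross term, and then apply the tower property with (S2)\eqref{gradient} and (S3) to replace $\nabla L_{B_k}(\bm{\theta}_k)$ by $\nabla L(\bm{\theta}_k)$ in expectation. Your remark about verifying that $\bm{\theta}_k$ is measurable with respect to $\xi_{[k-1]}$ is exactly the point the paper handles implicitly via the conditioning $\mathbb{E}[\,\cdot\mid\bm{\theta}_k]$, so there is no gap.
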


\begin{proof}
Let $\bm{\theta} \in \mathcal{S}$ and $k\in\mathbb{N}$. The definition of $\bm{\theta}_{k+1}$ implies that 
\begin{align*}
\| \bm{\theta}_{k+1} - \bm{\theta} \|_{\mathsf{H}_k}^2
= 
\| \bm{\theta}_{k} - \bm{\theta} \|_{\mathsf{H}_k}^2
+ 2 \alpha \langle \bm{\theta}_{k} - \bm{\theta}, \bm{\mathsf{d}}_k \rangle_{\mathsf{H}_k}
+ \alpha^2 \|\bm{\mathsf{d}}_k\|_{\mathsf{H}_k}^2.
\end{align*}
Moreover, the definitions of $\bm{\mathsf{d}}_k$, $\bm{m}_k$, and $\hat{\bm{m}}_k$ ensure that 
\begin{align*}
\left\langle \bm{\theta}_k - \bm{\theta}, \bm{\mathsf{d}}_k \right\rangle_{\mathsf{H}_k}
=
\frac{1}{{\tilde{\gamma}}_k}
(\bm{\theta} - \bm{\theta}_k)^\top \bm{m}_k 
=
\frac{\beta}{{\tilde{\gamma}}_k} 
(\bm{\theta} - \bm{\theta}_k)^\top \bm{m}_{k-1} 
+
\frac{\tilde{\beta}}{{\tilde{\gamma}}_k} 
(\bm{\theta} - \bm{\theta}_k)^\top \nabla L_{B_k}(\bm{\theta}_k).
\end{align*}
Hence, 
\begin{align}\label{ineq:004}
\begin{split}
\left\|\bm{\theta}_{k+1} - \bm{\theta} \right\|_{\mathsf{H}_k}^2
&=
\left\| \bm{\theta}_k -\bm{\theta} \right\|_{\mathsf{H}_k}^2
+ \alpha^2 \left\| \bm{\mathsf{d}}_k \right\|_{\mathsf{H}_k}^2\\
&\quad + 2 \alpha \left\{
\frac{\beta}{{\tilde{\gamma}}_k} 
(\bm{\theta} - \bm{\theta}_k)^\top \bm{m}_{k-1} 
+ \frac{\tilde{\beta}}{{\tilde{\gamma}}_k} 
(\bm{\theta} - \bm{\theta}_k)^\top \nabla L_{B_k} (\bm{\theta}_k) 
\right\}.
\end{split}
\end{align}
Conditions \eqref{gradient} and (S3) guarantee that
\begin{align*}
\mathbb{E}\left[ \mathbb{E} \left[(\bm{\theta} - \bm{\theta}_k)^\top \nabla L_{B_k} (\bm{\theta}_k) \Big| \bm{\theta}_k \right] \right]
&=
\mathbb{E} \left[(\bm{\theta} - \bm{\theta}_k)^\top 
\mathbb{E} \left[\nabla L_{B_k} (\bm{\theta}_k) \Big| \bm{\theta}_k \right] \right]\\
&=
\mathbb{E} \left[(\bm{\theta} - \bm{\theta}_k)^\top 
\nabla L (\bm{\theta}_k) \right].
\end{align*}
Therefore, the lemma follows by taking the expectation with respect to $\xi_k$ on both sides of \eqref{ineq:004}. This completes the proof.
\end{proof}

Lemma \ref{lem:1} leads to the following:

\begin{lem}\label{lem:key}
Suppose that the assumptions in Lemma \ref{lem:1} hold and consider Algorithm \ref{algo:1}. Let $V_k (\bm{\theta}) := \mathbb{E}[(\bm{\theta}_k - \bm{\theta})^\top \nabla L (\bm{\theta}_k)]$ for all $\bm{\theta}\in \mathcal{S}$ and all $k\in\mathbb{N}$. Then, for all $\bm{\theta}\in\mathcal{S}$ and all $K \geq 1$,
\begin{align}\label{key}
\begin{split}
{\sum_{k=1}^K V_k (\bm{\theta})}
&=
\frac{1}{2 \alpha \tilde{\beta}} 
\underbrace{\sum_{k=1}^K \tilde{\gamma}_k
\left\{ 
\mathbb{E}\left[\left\| \bm{\theta}_{k} - \bm{\theta} \right\|_{\mathsf{H}_k}^2\right]
-
\mathbb{E}\left[\left\| \bm{\theta}_{k+1} - \bm{\theta} \right\|_{\mathsf{H}_k}^2\right]
\right\}}_{\Gamma_K}\\
&\quad 
+ \frac{\alpha}{2 \tilde{\beta}} \underbrace{\sum_{k=1}^K \tilde{\gamma}_k \mathbb{E} \left[ \left\|\bm{\mathsf{d}}_k \right\|_{\mathsf{H}_k}^2 \right]}_{A_K}
+ \frac{\beta}{\tilde{\beta}}
\underbrace{
\sum_{k=1}^K \mathbb{E} \left[ (\bm{\theta} - \bm{\theta}_k)^\top \bm{m}_{k-1} \right]}_{B_K}.
\end{split}
\end{align} 
\end{lem}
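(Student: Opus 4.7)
\medskip

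\noindent\textbf{Proof proposal for Lemma \ref{lem:key}.}
The plan is to start directly from the identity in Lemma \ref{lem:1} and solve it algebraically for the term involving $\nabla L(\bm{\theta}_k)$, which (up to sign) is exactly $V_k(\bm{\theta})$; no additional probabilistic reasoning is needed beyond what Lemma \ref{lem:1} already provides.

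First I would fix $k\in\mathbb{N}$ and $\bm{\theta}\in\mathcal{S}$ and rewrite the conclusion of Lemma \ref{lem:1} in the form
\begin{align*}
\frac{2\alpha\tilde{\beta}}{\tilde{\gamma}_k}\,\mathbb{E}\left[(\bm{\theta}-\bm{\theta}_k)^\top \nabla L(\bm{\theta}_k)\right]
&= \mathbb{E}\left[\|\bm{\theta}_{k+1}-\bm{\theta}\|_{\mathsf{H}_k}^2\right]
- \mathbb{E}\left[\|\bm{\theta}_{k}-\bm{\theta}\|_{\mathsf{H}_k}^2\right]
- \alpha^2\,\mathbb{E}\left[\|\bm{\mathsf{d}}_k\|_{\mathsf{H}_k}^2\right] \\
&\quad - \frac{2\alpha\beta}{\tilde{\gamma}_k}\,\mathbb{E}\left[(\bm{\theta}-\bm{\theta}_k)^\top \bm{m}_{k-1}\right].
\end{align*}
Since $V_k(\bm{\theta}) = \mathbb{E}[(\bm{\theta}_k-\bm{\theta})^\top \nabla L(\bm{\theta}_k)] = -\mathbb{E}[(\bm{\theta}-\bm{\theta}_k)^\top \nabla L(\bm{\theta}_k)]$, multiplying through by $-\tilde{\gamma}_k/(2\alpha\tilde{\beta})$ yields the per-step identity
\begin{align*}
V_k(\bm{\theta}) &= \frac{\tilde{\gamma}_k}{2\alpha\tilde{\beta}}\left\{\mathbb{E}\left[\|\bm{\theta}_{k}-\bm{\theta}\|_{\mathsf{H}_k}^2\right] - \mathbb{E}\left[\|\bm{\theta}_{k+1}-\bm{\theta}\|_{\mathsf{H}_k}^2\right]\right\} \\
&\quad + \frac{\alpha\,\tilde{\gamma}_k}{2\tilde{\beta}}\,\mathbb{E}\left[\|\bm{\mathsf{d}}_k\|_{\mathsf{H}_k}^2\right] + \frac{\beta}{\tilde{\beta}}\,\mathbb{E}\left[(\bm{\theta}-\bm{\theta}_k)^\top \bm{m}_{k-1}\right].
\end{align*}

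Summing this identity from $k=1$ to $K$ and recognizing the three sums as $\Gamma_K$, $A_K$, and $B_K$ (after pulling the scalar prefactors $1/(2\alpha\tilde{\beta})$, $\alpha/(2\tilde{\beta})$, and $\beta/\tilde{\beta}$ outside), I immediately obtain \eqref{key}. The whole argument is a one-line rearrangement followed by a telescoping-free summation; I do not expect any real obstacle, since the weights $\tilde{\gamma}_k$ attached to the first two sums match the definitions of $\Gamma_K$ and $A_K$ exactly, and the coefficient on $\mathbb{E}[(\bm{\theta}-\bm{\theta}_k)^\top\bm{m}_{k-1}]$ is independent of $k$ (the $\tilde{\gamma}_k$ cancels), so no index-dependent bookkeeping is required.
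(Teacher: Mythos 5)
Your proposal is correct and follows essentially the same route as the paper: rearrange the identity of Lemma \ref{lem:1} to isolate $V_k(\bm{\theta})$, multiply by $\tilde{\gamma}_k/(2\alpha\tilde{\beta})$, and sum over $k=1,\dots,K$. The algebra in your per-step identity matches the paper's exactly, including the observation that the $\tilde{\gamma}_k$ factor cancels on the $\bm{m}_{k-1}$ term.
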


\begin{proof}
Let $\bm{\theta} \in \mathcal{S}$. Lemma \ref{lem:1} guarantees that, for all $k\in\mathbb{N}$,
\begin{align*}
\frac{2 \alpha \tilde{\beta}}{\tilde{\gamma}_k} V_k (\bm{\theta})
&=
\mathbb{E}\left[\left\| \bm{\theta}_{k} - \bm{\theta} \right\|_{\mathsf{H}_k}^2\right]
-
\mathbb{E}\left[\left\| \bm{\theta}_{k+1} - \bm{\theta} \right\|_{\mathsf{H}_k}^2\right]
+ \alpha^2 \mathbb{E} \left[ \left\|\bm{\mathsf{d}}_k \right\|_{\mathsf{H}_k}^2 \right]\\
&\quad + \frac{2 \alpha \beta}{\tilde{\gamma}_k} \mathbb{E} \left[ 
(\bm{\theta} - \bm{\theta}_k)^\top \bm{m}_{k-1} \right].
\end{align*}
Hence, we have that 
\begin{align*}
V_k (\bm{\theta})
&=
\frac{\tilde{\gamma}_k}{2 \alpha \tilde{\beta}}
\left\{
\mathbb{E}\left[\left\| \bm{\theta}_{k} - \bm{\theta} \right\|_{\mathsf{H}_k}^2\right]
-
\mathbb{E}\left[\left\| \bm{\theta}_{k+1} - \bm{\theta} \right\|_{\mathsf{H}_k}^2\right]
\right\}\\
&\quad +
\frac{\alpha \tilde{\gamma}_k}{2 \tilde{\beta}} \mathbb{E} \left[ \left\|\bm{\mathsf{d}}_k \right\|_{\mathsf{H}_k}^2 \right]
+ \frac{\beta}{\tilde{\beta}} \mathbb{E} \left[ 
(\bm{\theta} - \bm{\theta}_k)^\top \bm{m}_{k-1} \right].
\end{align*}
Summing the above inequality from $k=1$ to $K \geq 1$ implies the assertion in Lemma \ref{lem:key}. This completes the proof.
\end{proof}

\begin{lem}\label{lem:bdd}
Algorithm \ref{algo:1} satisfies that, under (S2)\eqref{gradient}, \eqref{sigma}, and (L1), for all $k\in\mathbb{N}$,
\begin{align*}
\mathbb{E}\left[ \|\bm{m}_k\|^2 \right] 
\leq 
\frac{\sigma^2}{b} + P^2.
\end{align*}
Under (A1) and (L1), for all $k\in\mathbb{N}$,
\begin{align*}
\mathbb{E}\left[ \|\bm{\mathsf{d}}_k\|_{\mathsf{H}_k}^2 \right] 
\leq 
\frac{1}{(1-{\gamma})^2 h_0^*} \left( \frac{\sigma^2}{b} + P^2 \right),
\end{align*}
where $h_0^* := \min_{i\in [d]} h_{0,i}$.
\end{lem}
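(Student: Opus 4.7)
The plan is to chain three standard ingredients: (i) a mini-batch variance bound on $\mathbb{E}[\|\nabla L_{B_k}(\bm{\theta}_k)\|^2]$, (ii) convexity of $\|\cdot\|^2$ applied to the momentum recursion together with an induction on $k$, and (iii) the diagonal positive-definite structure of $\mathsf{H}_k$ combined with the monotonicity (A1). The first bound in the lemma will drop out of (i)+(ii), and the second from (iii) applied on top.

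For step (i), I would use (S3) to write
$\nabla L_{B_k}(\bm{\theta}_k) - \nabla L(\bm{\theta}_k) = (1/b) \sum_{i \in [b]} (\mathsf{G}_{\xi_{k,i}}(\bm{\theta}_k) - \nabla L(\bm{\theta}_k))$. Since the samples $\xi_{k,1},\ldots,\xi_{k,b}$ are independent conditional on $\bm{\theta}_k$, the cross terms vanish in conditional expectation, and (S2)\eqref{gradient}--\eqref{sigma} give $\mathbb{E}[\|\nabla L_{B_k}(\bm{\theta}_k) - \nabla L(\bm{\theta}_k)\|^2 \mid \bm{\theta}_k] \leq \sigma^2/b$. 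Orthogonality via (S2)\eqref{gradient} then yields $\mathbb{E}[\|\nabla L_{B_k}(\bm{\theta}_k)\|^2 \mid \bm{\theta}_k] = \|\nabla L(\bm{\theta}_k)\|^2 + \mathbb{E}[\|\nabla L_{B_k}(\bm{\theta}_k) - \nabla L(\bm{\theta}_k)\|^2 \mid \bm{\theta}_k]$, and taking the total expectation with (L1) gives $\mathbb{E}[\|\nabla L_{B_k}(\bm{\theta}_k)\|^2] \leq \sigma^2/b + P^2$. For step (ii), convexity of $\|\cdot\|^2$ applied to $\bm{m}_k = \beta \bm{m}_{k-1} + (1-\beta) \nabla L_{B_k}(\bm{\theta}_k)$ (with $\beta + (1-\beta) = 1$) gives $\|\bm{m}_k\|^2 \leq \beta \|\bm{m}_{k-1}\|^2 + (1-\beta) \|\nabla L_{B_k}(\bm{\theta}_k)\|^2$. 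Taking expectations and inducting on $k$ from the base case $\bm{m}_{-1} = \bm{0}$ preserves the bound $\sigma^2/b + P^2$ at every step, proving the first assertion.

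For step (iii), the diagonal structure of $\mathsf{H}_k \in \mathbb{S}_{++}^d \cap \mathbb{D}^d$ makes step 6 of Algorithm \ref{algo:1} give $\bm{\mathsf{d}}_k = -\mathsf{H}_k^{-1} \hat{\bm{m}}_k$, whence $\|\bm{\mathsf{d}}_k\|_{\mathsf{H}_k}^2 = \hat{\bm{m}}_k^\top \mathsf{H}_k^{-1} \hat{\bm{m}}_k = \sum_{i \in [d]} \hat{m}_{k,i}^2 / h_{k,i}$. The monotonicity in (A1) yields $h_{k,i} \geq h_{0,i} \geq h_0^*$ for every $i$, so $\|\bm{\mathsf{d}}_k\|_{\mathsf{H}_k}^2 \leq \|\hat{\bm{m}}_k\|^2 / h_0^*$. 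Using $\hat{\bm{m}}_k = (1-\gamma^{k+1})^{-1} \bm{m}_k$ and $1 - \gamma^{k+1} \geq 1 - \gamma$, I get $\|\bm{\mathsf{d}}_k\|_{\mathsf{H}_k}^2 \leq \|\bm{m}_k\|^2 / [(1-\gamma)^2 h_0^*]$. Taking expectations and substituting the first assertion gives the second bound.

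There is no substantial obstacle here: the argument is entirely standard variance arithmetic, a one-step convexity inequality, a one-line induction, and a coordinate-wise comparison using the diagonal structure. The main care is bookkeeping: making sure (A1) is what converts the running $h_{k,i}$ into the uniform $h_0^*$, that the bias-correction factor $(1-\gamma^{k+1})^{-1}$ is correctly replaced by the worst case $(1-\gamma)^{-1}$, and that the induction for $\bm{m}_k$ closes at $k = -1$ because $\bm{m}_{-1} = \bm{0}$.
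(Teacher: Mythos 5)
Your proposal is correct and follows essentially the same route as the paper: the same orthogonal decomposition plus (L1) for the bound $\mathbb{E}[\|\nabla L_{B_k}(\bm{\theta}_k)\|^2]\leq \sigma^2/b+P^2$, the same convexity-plus-induction argument for $\bm{m}_k$ starting from $\bm{m}_{-1}=\bm{0}$, and the same use of (A1) and $\tilde{\gamma}_k\geq 1-\gamma$ for $\bm{\mathsf{d}}_k$ (you compute $\|\bm{\mathsf{d}}_k\|_{\mathsf{H}_k}^2=\sum_i \hat{m}_{k,i}^2/h_{k,i}$ coordinate-wise, where the paper routes the same estimate through the matrix square root $\overline{\mathsf{H}}_k$ and its operator norm --- a purely cosmetic difference). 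If anything, your explicit independence argument for the $\sigma^2/b$ variance reduction is spelled out more fully than in the paper.
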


\begin{proof}
Let $k\in\mathbb{N}$. From (S2)\eqref{gradient}, we have that
\begin{align*}
\mathbb{E} \left[\| \nabla L_{B_k} (\bm{\theta}_{k}) \|^2
\big| \bm{\theta}_k
\right]
&=
\mathbb{E} \left[\| \nabla L_{B_k} (\bm{\theta}_{k}) 
- \nabla L (\bm{\theta}_{k}) + \nabla L (\bm{\theta}_{k}) \|^2
\big| \bm{\theta}_k
\right]\\
&=
\mathbb{E} \left[\| \nabla L_{B_k} (\bm{\theta}_{k}) 
- \nabla L (\bm{\theta}_{k}) \|^2 \big| \bm{\theta}_k
\right]
+ 
\mathbb{E} \left[\| \nabla L (\bm{\theta}_{k}) \|^2 \big| \bm{\theta}_k
\right]\\
&\quad + 2 
\mathbb{E} \left[ 
(\nabla L_{B_k} (\bm{\theta}_{k}) 
- \nabla L (\bm{\theta}_{k}))^\top \nabla L (\bm{\theta}_{k})
\Big| \bm{\theta}_k \right]\\
&= 
\mathbb{E} \left[\| \nabla L_{B_k} (\bm{\theta}_{k}) 
- \nabla L (\bm{\theta}_{k}) \|^2 \big| \bm{\theta}_k
\right]
+ 
\| \nabla L (\bm{\theta}_{k}) \|^2,
\end{align*} 
which, together with (S2)\eqref{sigma} and (L1), implies that 
\begin{align}\label{A3}
\mathbb{E} \left[\| \nabla L_{B_k} (\bm{\theta}_{k}) \|^2
\right]
\leq 
\frac{\sigma^2}{b} + P^2.
\end{align}
The convexity of $\|\cdot\|^2$, together with the definition of $\bm{m}_k$ and \eqref{A3}, guarantees that, for all $k\in\mathbb{N}$,
\begin{align*}
\mathbb{E}\left[ \|\bm{m}_k\|^2 \right]
&\leq \beta \mathbb{E}\left[ \|\bm{m}_{k-1} \|^2 \right] + 
(1-\beta) \mathbb{E}\left[ \|\nabla L_{B_k} (\bm{\theta}_k) \|^2 \right]\\
&\leq 
\beta \mathbb{E} \left[ \|\bm{m}_{k-1} \|^2 \right] + (1-\beta) 
\left( \frac{\sigma^2}{b} + P^2 \right).
\end{align*}
Induction thus ensures that, for all $k\in\mathbb{N}$,
\begin{align}\label{induction}
\mathbb{E} \left[\|\bm{m}_k \|^2 \right] \leq 
\max \left\{ \|\bm{m}_{-1}\|^2, \frac{\sigma^2}{b} + P^2 \right\} 
= \frac{\sigma^2}{b} + P^2,
\end{align}
where $\bm{m}_{-1} = \bm{0}$. For $k\in\mathbb{N}$, $\mathsf{H}_k \in \mathbb{S}_{++}^d$ guarantees the existence of a unique matrix $\overline{\mathsf{H}}_k \in \mathbb{S}_{++}^d$ such that $\mathsf{H}_k = \overline{\mathsf{H}}_k^2$ \cite[Theorem 7.2.6]{horn}. We have that, for all $\bm{x}\in\mathbb{R}^d$, $\|\bm{x}\|_{\mathsf{H}_k}^2 = \| \overline{\mathsf{H}}_k \bm{x} \|^2$. Accordingly, the definitions of $\bm{\mathsf{d}}_k$ and $\hat{\bm{m}}_k$ imply that, for all $k\in\mathbb{N}$, 
\begin{align*}
\mathbb{E} \left[ \| \bm{\mathsf{d}}_k \|_{\mathsf{H}_k}^2 \right]
= 
\mathbb{E} \left[ \left\| \overline{\mathsf{H}}_k^{-1} \mathsf{H}_k\bm{\mathsf{d}}_k \right\|^2 \right]
\leq 
\frac{1}{{\tilde{\gamma}}_k^2} \mathbb{E} \left[ \left\| \overline{\mathsf{H}}_k^{-1} \right\|^2 \|\bm{m}_k \|^2 \right]
\leq 
\frac{1}{(1 - \gamma)^2} \mathbb{E} \left[ \left\| \overline{\mathsf{H}}_k^{-1} \right\|^2 \|\bm{m}_k \|^2 \right],
\end{align*}
where 
\begin{align*}
\left\| \overline{\mathsf{H}}_k^{-1} \right\| = \left\| \mathsf{diag}\left(h_{k,i}^{-\frac{1}{2}} \right) \right\| = {\max_{i\in [d]} h_{k,i}^{-\frac{1}{2}}} 
\end{align*}
and ${\tilde{\gamma}}_k := 1 - {\gamma}^{k+1} \geq 1 - {\gamma}$. Moreover, (A1) ensures that, for all $k \in \mathbb{N}$, 
\begin{align*}
h_{k,i} \geq h_{0,i} \geq h_0^* := \min_{i\in [d]} h_{0,i}.
\end{align*}
Hence, (\ref{induction}) implies that, for all $k\in \mathbb{N}$,
\begin{align*}
\mathbb{E} \left[\| \bm{\mathsf{d}}_k \|_{\mathsf{H}_k}^2 \right] \leq 
\frac{1}{(1-{\gamma})^2 h_0^*} \left( \frac{\sigma^2}{b} + P^2 \right),
\end{align*}
completing the proof.
\end{proof}

We are now in the position to prove the following theorems.

\begin{thm}\label{thm:main}
Suppose that (S1)--(S3), (A1)--(A2), and (L1)--(L2) hold and consider Algorithm \ref{algo:1}. Then, for all $\bm{\theta} \in \mathcal{S}$ and all $K\geq 1$,
\begin{align*}
\mathrm{V}(K,\bm{\theta})
\leq
\frac{d \mathrm{Dist}(\bm{\theta}) H}{2 \alpha \tilde{\beta} K}
+
\frac{\alpha}{2 \tilde{\beta} \tilde{\gamma}^2 h_0^*} \left( \frac{\sigma^2}{b} + P^2 \right)
+
\frac{\beta \sqrt{d\mathrm{Dist}(\bm{\theta})}}{\tilde{\beta}} \sqrt{\frac{\sigma^2}{b} + P^2},
\end{align*}
where $\tilde{\gamma} := 1 - \gamma$, $\mathrm{Dist}(\bm{\theta})$ and $H_i$ are defined as in (L2) and (A2), and $H := \max_{i\in [d]} H_i$.
\end{thm}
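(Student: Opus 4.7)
The starting point is the identity in Lemma \ref{lem:key}, which expresses $K \cdot \mathrm{V}(K,\bm{\theta}) = \sum_{k=1}^K V_k(\bm{\theta})$ as the sum of three pieces $\Gamma_K/(2\alpha\tilde{\beta})$, $\alpha A_K/(2\tilde{\beta})$, and $\beta B_K/\tilde{\beta}$. The plan is to bound each of the three pieces separately and then divide by $K$. The target bounds are $\Gamma_K \leq d\,\mathrm{Dist}(\bm{\theta})\,H$, $A_K \leq K(\sigma^2/b+P^2)/(\tilde{\gamma}^2 h_0^*)$, and $B_K \leq K\sqrt{d\,\mathrm{Dist}(\bm{\theta})}\sqrt{\sigma^2/b+P^2}$; summing the three contributions and dividing by $K$ yields exactly the claimed inequality.

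\textbf{Bounding $A_K$ and $B_K$.} For $A_K$, I would use $\tilde{\gamma}_k = 1-\gamma^{k+1} \leq 1$ to pull $\tilde{\gamma}_k$ out, then invoke the ready-made second estimate from Lemma \ref{lem:bdd} term by term to obtain $A_K \leq K(\sigma^2/b+P^2)/(\tilde{\gamma}^2 h_0^*)$. For $B_K$, apply Cauchy--Schwarz to each summand:
\begin{align*}
\bigl|\mathbb{E}[(\bm{\theta}-\bm{\theta}_k)^\top \bm{m}_{k-1}]\bigr| \leq \sqrt{\mathbb{E}\|\bm{\theta}-\bm{\theta}_k\|^2}\,\sqrt{\mathbb{E}\|\bm{m}_{k-1}\|^2}.
\end{align*}
The first factor is at most $\sqrt{d\,\mathrm{Dist}(\bm{\theta})}$ by condition (L2) applied coordinate-wise, and the second is at most $\sqrt{\sigma^2/b+P^2}$ by the first estimate of Lemma \ref{lem:bdd}. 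Summing over $k$ gives the claimed bound on $B_K$.

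\textbf{Bounding $\Gamma_K$ (the main obstacle).} The delicate term is $\Gamma_K$, because the weights $\mathsf{H}_k$ are themselves random and vary with $k$, so a naive telescoping fails. The plan is to expand diagonally: for each coordinate $i \in [d]$, look at the scalar sum
\begin{align*}
S_i := \sum_{k=1}^K \tilde{\gamma}_k\, h_{k,i}\,\bigl\{(\theta_{k,i}-\theta_i)^2 - (\theta_{k+1,i}-\theta_i)^2\bigr\}
\end{align*}
inside the expectation. Set $w_k := \tilde{\gamma}_k h_{k,i}$ and $a_k := (\theta_{k,i}-\theta_i)^2$, and apply Abel summation:
\begin{align*}
\sum_{k=1}^K w_k(a_k - a_{k+1}) = w_1 a_1 + \sum_{k=2}^K (w_k - w_{k-1}) a_k - w_K a_{K+1}.
\end{align*}
Here the crucial observation is that $w_k$ is non-decreasing in $k$: $\tilde{\gamma}_k = 1-\gamma^{k+1}$ is increasing in $k$, and $h_{k,i}$ is non-decreasing by condition (A1). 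Hence every coefficient $(w_k-w_{k-1})$ is nonnegative, so I may bound each $a_k$ above by $\mathrm{Dist}(\bm{\theta})$ using (L2), drop the nonpositive $-w_K a_{K+1}$ term, and telescope the weights back to obtain $S_i \leq \mathrm{Dist}(\bm{\theta})\, w_K \leq \mathrm{Dist}(\bm{\theta})\, h_{K,i}$. Taking expectations and invoking (A2) gives $\mathbb{E}[S_i]\leq \mathrm{Dist}(\bm{\theta}) H_i \leq \mathrm{Dist}(\bm{\theta}) H$, and summing over the $d$ coordinates yields $\Gamma_K \leq d\,\mathrm{Dist}(\bm{\theta})\,H$.

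\textbf{Assembly.} Substituting the three bounds into Lemma \ref{lem:key} and dividing by $K$ gives exactly
\begin{align*}
\mathrm{V}(K,\bm{\theta}) \leq \frac{d\,\mathrm{Dist}(\bm{\theta})\,H}{2\alpha\tilde{\beta} K} + \frac{\alpha}{2\tilde{\beta}\tilde{\gamma}^2 h_0^*}\!\left(\frac{\sigma^2}{b}+P^2\right) + \frac{\beta\sqrt{d\,\mathrm{Dist}(\bm{\theta})}}{\tilde{\beta}}\sqrt{\frac{\sigma^2}{b}+P^2},
\end{align*}
which is the desired inequality. The only genuinely nontrivial step is the Abel-summation argument for $\Gamma_K$, where the monotonicity baked into (A1) and the uniform distance bound (L2) must be combined carefully to survive the fact that the metric $\mathsf{H}_k$ changes between consecutive terms.
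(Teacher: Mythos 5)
Your proposal is correct and follows essentially the same route as the paper's proof: the same decomposition via Lemma \ref{lem:key}, the same summation-by-parts treatment of $\Gamma_K$ exploiting the monotonicity of $\tilde{\gamma}_k h_{k,i}$ from (A1) together with (L2) and (A2), and the same use of Lemma \ref{lem:bdd} for $A_K$ and of Cauchy--Schwarz for $B_K$ (the paper applies Cauchy--Schwarz pointwise and then Jensen's inequality, whereas you apply it at the level of expectations, but both yield the identical bound). Your explicit Abel-summation formulation of the $\Gamma_K$ argument is just a cleaner packaging of the paper's rearrangement of $\Gamma_K$ into $\tilde{\gamma}_1\mathbb{E}[\|\bm{\theta}_1-\bm{\theta}\|_{\mathsf{H}_1}^2]+\tilde{\Gamma}_K-\tilde{\gamma}_K\mathbb{E}[\|\bm{\theta}_{K+1}-\bm{\theta}\|_{\mathsf{H}_K}^2]$.
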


\begin{proof}
Let $\bm{\theta}\in \mathcal{S}$. From the definition of $\Gamma_K$ in Lemma \ref{lem:key}, we have that
\begin{align}\label{LAM}
\begin{split}
\Gamma_K
&=
\tilde{\gamma}_1 \mathbb{E}\left[\left\| \bm{\theta}_{1} - \bm{\theta} \right\|_{\mathsf{H}_{1}}^2\right]
+
\underbrace{
\sum_{k=2}^K \left\{
\tilde{\gamma}_k \mathbb{E}\left[\left\| \bm{\theta}_{k} - \bm{\theta} \right\|_{\mathsf{H}_{k}}^2\right]
-
\tilde{\gamma}_{k-1} \mathbb{E}\left[\left\| \bm{\theta}_{k} - \bm{\theta} \right\|_{\mathsf{H}_{k-1}}^2\right] 
\right\}
}_{\tilde{\Gamma}_K}\\
&\quad-
\tilde{\gamma}_K \mathbb{E} \left[ \left\| \bm{\theta}_{K+1} - \bm{\theta} \right\|_{\mathsf{H}_{K}}^2 \right].
\end{split}
\end{align}
Since $\overline{\mathsf{H}}_k \in \mathbb{S}_{++}^d$ exists such that $\mathsf{H}_k = \overline{\mathsf{H}}_k^2$, we have $\|\bm{x}\|_{\mathsf{H}_k}^2 = \| \overline{\mathsf{H}}_k \bm{x} \|^2$ for all $\bm{x}\in\mathbb{R}^d$. Accordingly, we have 
\begin{align*}
\tilde{\Gamma}_K 
=
\mathbb{E} \left[ 
\sum_{k=2}^K 
\left\{
\tilde{\gamma}_k \left\| \overline{\mathsf{H}}_{k} (\bm{\theta}_{k} - \bm{\theta}) \right\|^2
-
\tilde{\gamma}_{k-1} \left\| \overline{\mathsf{H}}_{k-1} (\bm{\theta}_{k} - \bm{\theta}) \right\|^2
\right\}
\right].
\end{align*}
From $\overline{\mathsf{H}}_{k} = \mathsf{diag}(\sqrt{h_{k,i}})$, we have that, for all $\bm{x} = (x_i)_{i=1}^d \in \mathbb{R}^d$, $\| \overline{\mathsf{H}}_{k} \bm{x} \|^2 = \sum_{i=1}^d h_{k,i} x_i^2$. Hence, for all $K\geq 2$,
\begin{align}\label{DELTA}
\tilde{\Gamma}_K 
= 
\mathbb{E} \left[ 
\sum_{k=2}^K
\sum_{i=1}^d 
\left(
\tilde{\gamma}_k h_{k,i}
-
\tilde{\gamma}_{k-1} h_{k-1,i}
\right)
(\theta_{k,i} - \theta_i)^2
\right].
\end{align}
From (A1), we have that, for all $k \geq 1$ and all $i\in [d]$,
\begin{align*}
\tilde{\gamma}_k h_{k,i} - \tilde{\gamma}_{k-1} h_{k-1,i} \geq 0.
\end{align*} 
Moreover, from (L2), $\max_{i \in [d]} \sup_{k\in\mathbb{N}} (\theta_{k,i} - \theta_i)^2 \leq \mathrm{Dist}(\bm{\theta})$. Accordingly, for all $K \geq 2$,
\begin{align*}
\tilde{\Gamma}_K
\leq
\mathrm{Dist}(\bm{\theta})
\mathbb{E} \left[ 
\sum_{k=2}^K
\sum_{i=1}^d 
\left(
\tilde{\gamma}_k h_{k,i}
-
\tilde{\gamma}_{k-1} h_{k-1,i}
\right)
\right]
= 
\mathrm{Dist}(\bm{\theta})
\mathbb{E} \left[ 
\sum_{i=1}^d
\left(
\tilde{\gamma}_K h_{K,i}
-
\tilde{\gamma}_1 h_{1,i}
\right)
\right].
\end{align*}
Therefore, (\ref{LAM}), $\tilde{\gamma}_1 \mathbb{E} [\| \bm{\theta}_{1} - \bm{\theta}\|_{\mathsf{H}_{1}}^2] \leq \mathrm{Dist}(\bm{\theta}) \tilde{\gamma}_1 \mathbb{E} [ \sum_{i=1}^d h_{1,i}]$, and (A2) imply, for all $K\geq 1$,
\begin{align*}
\Gamma_K
&\leq
\tilde{\gamma}_1 \mathrm{Dist}(\bm{\theta}) \mathbb{E} \left[ 
\sum_{i=1}^d h_{1,i} \right]
+
\mathrm{Dist}(\bm{\theta})
\mathbb{E} \left[
\sum_{i=1}^d 
\left(
\tilde{\gamma}_K h_{K,i}
-
\tilde{\gamma}_1 h_{1,i}
\right)
\right]\\
&=
\tilde{\gamma}_K \mathrm{Dist}(\bm{\theta})
\mathbb{E} \left[
\sum_{i=1}^d 
h_{K,i}
\right]\\
&\leq 
\tilde{\gamma}_K \mathrm{Dist}(\bm{\theta}) 
\sum_{i=1}^d 
H_{i}\\
&\leq 
\tilde{\gamma}_K \mathrm{Dist}(\bm{\theta}) 
d H,
\end{align*}
where $H = \max_{i\in [d]} H_i$. From $\tilde{\gamma}_K = 1 - {\gamma}^{K+1} \leq 1$, we have that 
\begin{align}\label{L} 
\frac{1}{2 \alpha \tilde{\beta}} \Gamma_K 
\leq 
\frac{d \mathrm{Dist}(\bm{\theta}) H}{2 \alpha \tilde{\beta}}.
\end{align}
Lemma \ref{lem:bdd} implies that, for all $K\geq 1$,
\begin{align*}
A_K := \sum_{k=1}^K \tilde{\gamma}_k \mathbb{E} \left[ \left\|\bm{\mathsf{d}}_k \right\|_{\mathsf{H}_k}^2 \right] 
\leq 
\sum_{k=1}^K
\frac{\tilde{\gamma}_k}{(1-{\gamma})^2 h_0^*} \left( \frac{\sigma^2}{b} + P^2 \right),
\end{align*}
which, together with $\tilde{\gamma}_k \leq 1$, implies that
\begin{align}\label{D}
\frac{\alpha}{2 \tilde{\beta}} A_K 
\leq
\frac{\alpha K}{2 \tilde{\beta} (1-{\gamma})^2 h_0^*} \left( \frac{\sigma^2}{b} + P^2 \right).
\end{align}
Lemma \ref{lem:bdd} and Jensen's inequality ensure that, for all $k\in\mathbb{N}$,
\begin{align*}
\mathbb{E}\left[ \|\bm{m}_k\| \right] 
\leq 
\sqrt{\frac{\sigma^2}{b} + P^2}.
\end{align*}
The Cauchy-Schwarz inequality and (L2) guarantee that, for all $K\geq 1$,
\begin{align}\label{B}
\begin{split}
\frac{\beta}{\tilde{\beta}} B_K 
&:= 
\frac{\beta}{\tilde{\beta}}
\sum_{k=1}^K \mathbb{E} \left[ (\bm{\theta} - \bm{\theta}_k)^\top \bm{m}_{k-1} \right]
\leq 
\frac{\beta}{\tilde{\beta}} \sqrt{d\mathrm{Dist}(\bm{\theta})} \sum_{k=1}^K \mathbb{E} \left[
\left\|\bm{m}_{k-1} \right\|
\right]\\
&\leq
\frac{\beta \sqrt{d\mathrm{Dist}(\bm{\theta})}}{\tilde{\beta}} K \sqrt{\frac{\sigma^2}{b} + P^2}.
\end{split}
\end{align}
Therefore, (\ref{key}), (\ref{L}), (\ref{D}), and (\ref{B}) lead to the assertion in Theorem \ref{thm:main}. This completes the proof.
\end{proof}

\begin{thm}\label{thm:main2}
Suppose that (S1)--(S3), (A1), (L1), and (U1)--(U3) hold and consider Algorithm \ref{algo:1}. Then, for all $K \geq 1$,
\begin{align*}
\mathrm{V}(K,\bm{\theta}^*) 
\geq 
\frac{X (\bm{\theta}^*)}{2 \alpha \tilde{\beta} K}
+ 
\frac{\sigma^2 (c_1 \alpha \tilde{\gamma} - 2 c_2 \beta)}{2 \tilde{\beta}b}
-
\frac{c_2 \beta P^2}{\tilde{\beta}},
\end{align*}
where $X(\bm{\theta}^*)$, $c_1$, and $c_2$ are defined by (U1)--(U3), and $\sigma^2$ is defined by (S2)\eqref{sigma}.
\end{thm}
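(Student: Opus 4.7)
The plan is to apply the master identity \eqref{key} of Lemma \ref{lem:key} at $\bm{\theta} = \bm{\theta}^*$ and, after dividing by $K$, to bound each of the three pieces $\Gamma_K$, $A_K$, $B_K$ from below using (U3), (U1), and (U2) respectively. That is,
\begin{equation*}
\mathrm{V}(K,\bm{\theta}^*) = \frac{\Gamma_K}{2\alpha\tilde{\beta}K} + \frac{\alpha A_K}{2\tilde{\beta}K} + \frac{\beta B_K}{\tilde{\beta}K},
\end{equation*}
so a lower bound on each summand translates directly into the target inequality once the $1/b$ contributions are collected.

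The two easy steps come first. Condition (U1) gives $\mathbb{E}[\|\bm{\mathsf{d}}_k\|_{\mathsf{H}_k}^2] \geq c_1 \sigma^2/b$, and since $\tilde{\gamma}_k = 1 - \gamma^{k+1} \geq 1 - \gamma = \tilde{\gamma}$ for every $k\geq 1$, summing yields $A_K \geq K \tilde{\gamma} c_1 \sigma^2/b$. Condition (U2) applied term by term gives $B_K \geq -K c_2 (\sigma^2/b + P^2)$ directly, with no further work.

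The remaining task, and the main obstacle, is producing the lower bound $\Gamma_K \geq X(\bm{\theta}^*)$. Here I would reuse the rearrangement from the proof of Theorem \ref{thm:main}:
\begin{equation*}
\Gamma_K = \tilde{\gamma}_1 \mathbb{E}[\|\bm{\theta}_1 - \bm{\theta}^*\|_{\mathsf{H}_1}^2] + \tilde{\Gamma}_K - \tilde{\gamma}_K \mathbb{E}[\|\bm{\theta}_{K+1} - \bm{\theta}^*\|_{\mathsf{H}_K}^2],
\end{equation*}
where $\tilde{\Gamma}_K = \mathbb{E}\bigl[\sum_{k=2}^K \sum_{i=1}^d (\tilde{\gamma}_k h_{k,i} - \tilde{\gamma}_{k-1} h_{k-1,i})(\theta_{k,i} - \theta_i^*)^2\bigr]$. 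The monotonicity in (A1) combined with $\tilde{\gamma}_k \geq \tilde{\gamma}_{k-1}$ makes every coefficient $\tilde{\gamma}_k h_{k,i} - \tilde{\gamma}_{k-1} h_{k-1,i}$ nonnegative, so $\tilde{\Gamma}_K \geq 0$ and may be discarded. Since $\gamma \in [0,1)$, one has $\tilde{\gamma}_1 = 1-\gamma^2 \geq 1-\gamma = \tilde{\gamma}$ and $\tilde{\gamma}_K \leq 1$; both boundary weights move in the favorable direction, leaving
\begin{equation*}
\Gamma_K \geq \tilde{\gamma}\, \mathbb{E}[\|\bm{\theta}_1 - \bm{\theta}^*\|_{\mathsf{H}_1}^2] - \mathbb{E}[\|\bm{\theta}_{K+1} - \bm{\theta}^*\|_{\mathsf{H}_K}^2] \geq X(\bm{\theta}^*)
\end{equation*}
by (U3).

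Finally, substituting these three lower bounds into \eqref{key} and dividing by $K$ yields
\begin{equation*}
\mathrm{V}(K,\bm{\theta}^*) \geq \frac{X(\bm{\theta}^*)}{2\alpha\tilde{\beta}K} + \frac{\alpha \tilde{\gamma} c_1 \sigma^2}{2\tilde{\beta} b} - \frac{\beta c_2 \sigma^2}{\tilde{\beta} b} - \frac{\beta c_2 P^2}{\tilde{\beta}},
\end{equation*}
and pulling out the common factor $\sigma^2/(2\tilde{\beta} b)$ from the two middle terms recovers the stated coefficient $\sigma^2(c_1 \alpha \tilde{\gamma} - 2 c_2 \beta)/(2 \tilde{\beta})$. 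The delicate step is the $\Gamma_K$ estimate: one has to simultaneously exploit (A1) to kill the middle rearrangement term and manipulate $\tilde{\gamma}_1$ and $\tilde{\gamma}_K$ in opposite directions so that the surviving boundary expression matches verbatim the left-hand side of (U3). After that, the remaining estimates are one-line consequences of (U1) and (U2).
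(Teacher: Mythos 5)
Your proof is correct and follows essentially the same route as the paper's: the identity of Lemma \ref{lem:key} at $\bm{\theta}=\bm{\theta}^*$, the lower bound $\Gamma_K \geq X(\bm{\theta}^*)$ obtained by discarding $\tilde{\Gamma}_K \geq 0$ (via (A1) and $\tilde{\gamma}_k \geq \tilde{\gamma}_{k-1}$) and replacing $\tilde{\gamma}_1$ by $\tilde{\gamma}$ and $\tilde{\gamma}_K$ by $1$ before invoking (U3), and the termwise bounds on $A_K$ and $B_K$ from (U1) and (U2). The final bookkeeping also matches the paper exactly.
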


\begin{proof}
Condition (A1), together with \eqref{LAM} and \eqref{DELTA}, implies that $\tilde{\Gamma}_K \geq 0$ for all $K \geq 1$. Hence, from \eqref{LAM} and $1 \geq \tilde{\gamma}_k \geq \tilde{\gamma}_{k-1}$ ($k \geq 1$), 
\begin{align*}
\Gamma_K
&\geq 
\tilde{\gamma}_1 \mathbb{E}\left[\left\| \bm{\theta}_{1} - \bm{\theta}^* \right\|_{\mathsf{H}_{1}}^2\right]
-
\tilde{\gamma}_K \mathbb{E} \left[ \left\| \bm{\theta}_{K+1} - \bm{\theta}^* \right\|_{\mathsf{H}_{K}}^2 \right]\\
&\geq 
(1 - \gamma) \mathbb{E}\left[\left\| \bm{\theta}_{1} - \bm{\theta}^* \right\|_{\mathsf{H}_{1}}^2\right]
-
\mathbb{E} \left[ \left\| \bm{\theta}_{K+1} - \bm{\theta}^* \right\|_{\mathsf{H}_{K}}^2 \right],
\end{align*}
which, together with (U3), implies that, for all $K \geq 1$, 
\begin{align*}
\Gamma_K
&\geq
X (\bm{\theta}^*).
\end{align*}
Moreover, (U1) and (U2) imply that, for all $K \geq 1$,
\begin{align*}
&A_K = \sum_{k=1}^K \tilde{\gamma}_k \mathbb{E}\left[ \|\bm{\mathsf{d}}_k\|_{\mathsf{H}_k}^2 \right]
\geq \frac{c_1 \tilde{\gamma} \sigma^2}{b} K
\text{ and}\\ 
&B_K = \sum_{k=1}^K \mathbb{E}\left[ (\bm{\theta}^* - \bm{\theta}_k)^\top \bm{m}_{k-1} \right] \geq - c_2 \left( \frac{\sigma^2}{b} + P^2  \right) K.
\end{align*}
Accordingly, Lemma \ref{lem:key} leads to the assertion in Theorem \ref{thm:main2}. This completes the proof.
\end{proof}

\subsection{Proof of Theorem \ref{thm:1}}
\begin{proof}
Theorem \ref{thm:main} guarantees that, for all $\bm{\theta} \in \mathcal{S}$ and all $K \geq 1$,
\begin{align*}
\mathrm{V}(K,\bm{\theta})
\leq
\underbrace{\frac{d \mathrm{Dist}(\bm{\theta}) H}{2 \alpha \tilde{\beta}}}_
{A(\alpha,\beta,\bm{\theta},d,H)}
\frac{1}{K}
+
\underbrace{\frac{\sigma^2 \alpha}{2 \tilde{\beta} \tilde{\gamma}^2 h_0^*}}_
{B(\alpha,\beta,\gamma,\sigma^2,h_0^*)} \frac{1}{b}
+
\underbrace{\frac{P^2 \alpha}{2 \tilde{\beta} \tilde{\gamma}^2 h_0^*}}_{C(\alpha,\beta,\gamma,P^2,h_0^*)} 
+
\underbrace{\frac{\sqrt{d\mathrm{Dist}(\bm{\theta})(\sigma^2 + P^2)} \beta}{\tilde{\beta}}}_{D(\beta,\bm{\theta},d,\sigma^2, P^2)}.
\end{align*}
If the right-hand side of the above inequality is less than or equal to $\epsilon^2$, then we have that 
\begin{align*}
\frac{A}{K} + \frac{B}{b} + C + D \leq \epsilon^2,
\end{align*}
which implies that 
\begin{align*}
\underline{K}(b) := \frac{A b}
{\{\epsilon^2 - (C+D)\} b - B} \leq K,
\end{align*}
where $b > B/\{(\epsilon^2 - (C+D)\} \geq 0$. We have that, for $b > B/\{(\epsilon^2 - (C+D)\}$,
\begin{align*}
&\underline{K}^\star := 
\frac{A}{\epsilon^2 - (C+D)} < \underline{K}(b),\\
&\frac{\mathrm{d} \underline{K}(b)}{\mathrm{d} b}
= 
\frac{- A B}{[\{\epsilon^2 - (C+D)\} b - B]^2} \leq 0,\\
&\frac{\mathrm{d}^2 \underline{K}(b)}{\mathrm{d} b^2}
= 
\frac{2 A B \{\epsilon^2 - (C+D)\}}{[\{\epsilon^2 - (C+D)\} b - B]^3} \geq 0.
\end{align*}
Hence, $\underline{K}$ is monotone decreasing and convex for $b > B/\{(\epsilon^2 - (C+D)\}$. This completes the proof.
\end{proof}

\subsection{Proof of Theorem \ref{thm:1_1}}
\begin{proof}
We have that, for $b > B/\{(\epsilon^2 - (C+D)\}$, 
\begin{align*}
\underline{K}(b) b = 
\frac{A b^2}
{\{\epsilon^2 - (C+D)\} b - B}.
\end{align*}
Hence, 
\begin{align*}
&\frac{\mathrm{d} \underline{K}(b) b}{\mathrm{d} b}
= 
\frac{Ab[\{(\epsilon^2 - (C+D)\}b - 2 B]}
{[\{(\epsilon^2 - (C+D)\}b - B]^2},\\
&\frac{\mathrm{d}^2 \underline{K}(b) b}{\mathrm{d} b^2}
= 
\frac{2 AB^2}{[\{(\epsilon^2 - (C+D)\}b - B]^3} \geq 0,
\end{align*}
which implies that $\underline{K}(b) b$ is convex for $b > B/\{(\epsilon^2 - (C+D)\}$ and 
\begin{align*}
\frac{\mathrm{d} \underline{K}(b) b}{\mathrm{d} b}
\begin{cases}
< 0 &\text{ if } b < b_*,\\
= 0 &\text{ if } b = b_* = \frac{2B}{\epsilon^2 - (C+D)},\\
> 0 &\text{ if } b > b_*.
\end{cases}
\end{align*}
The point $b_*$ attains the minimum value $\underline{K}(b_*) b_*$ of $\underline{K}(b) b$. This completes the proof.
\end{proof}

\subsection{Proof of Theorem \ref{thm:2}}
\begin{proof}
Theorem \ref{thm:main2} guarantees that, for all $K \geq 1$,
\begin{align*}
\mathrm{V}(K,\bm{\theta}^*) 
\geq 
\underbrace{\frac{X (\bm{\theta}^*)}{2 \alpha \tilde{\beta}}}_{E(\alpha,\beta,\bm{\theta}^*)} \frac{1}{K}
+ 
\underbrace{\frac{\sigma^2 (c_1 \alpha \tilde{\gamma} - 2 c_2 \beta)}{2 \tilde{\beta}}}_{F(\alpha,\beta,\gamma,\sigma^2,c_1,c_2)} \frac{1}{b}
-
\underbrace{\frac{c_2 \beta P^2}{\tilde{\beta}}}_{G(\beta,c_2,P^2)}.
\end{align*}
If the right-hand side of the above inequality is more than or equal to $\delta \epsilon^2$, then we have that
\begin{align*}
\frac{E}{K} + \frac{F}{b} - G \geq \delta \epsilon^2,
\end{align*}
which implies that 
\begin{align*}
\overline{K}(b) := \frac{Eb}{(\delta \epsilon^2 + G) b - F} \geq K,
\end{align*}
where $b > F/(\delta \epsilon^2 + G) \geq 0$. We have that, for $b > F/(\delta \epsilon^2 + G)$, 
\begin{align*}
&\overline{K}^\star := \frac{E}{\delta \epsilon^2 + G} < \overline{K}(b),\\
&\frac{\mathrm{d}\overline{K}(b)}{\mathrm{d} b}
= \frac{-EF}{\{(\delta \epsilon^2 + G) b - F\}^2} \leq 0,\\
&\frac{\mathrm{d}^2 \overline{K}(b)}{\mathrm{d} b^2}
= \frac{2EF (\delta \epsilon^2 + G)}{\{(\delta \epsilon^2 + G) b - F\}^3} \geq 0.
\end{align*}
Hence, $\overline{K}$ is monotone decreasing and convex for $b > F/(\delta \epsilon^2 + G)$. This completes the proof. 
\end{proof}

\subsection{Proof of Theorem \ref{thm:2_1}}
\begin{proof}
We have that, for $b > F/(\delta \epsilon^2 + G)$,
\begin{align*}
\overline{K}(b)b = \frac{Eb^2}{(\delta \epsilon^2 + G) b - F}.
\end{align*}
Hence, 
\begin{align*}
&\frac{\mathrm{d} \overline{K}(b)b}{\mathrm{d}b}
= 
\frac{Eb\{ (\delta \epsilon^2 + G) b - 2F\}}{\{(\delta \epsilon^2 + G) b - F\}^2},\\
&\frac{\mathrm{d}^2 \overline{K}(b)b}{\mathrm{d}b^2}
= 
\frac{2EF^2}{\{(\delta \epsilon^2 + G) b - F\}^3} \geq 0,
\end{align*}
which implies that $\overline{K}(b)b$ is convex for $b > F/(\delta \epsilon^2 + G)$ and 
\begin{align*}
\frac{\mathrm{d}\overline{K}(b)b}{\mathrm{d}b}
\begin{cases}
< 0 &\text{ if } b < b^*,\\
= 0 &\text{ if } b = b^* = \frac{2F}{\delta \epsilon^2 + G},\\
> 0 &\text{ if } b > b^*.
\end{cases}
\end{align*}
The point $b^*$ attains the minimum value $\overline{K}(b^*)b^*$ of $\overline{K}(b)b$. This completes the proof.
\end{proof}

\end{document}